\documentclass[msom,nonblindrev]{informs3} % current default for manuscript submission
\OneAndAHalfSpacedXI
\usepackage{natbib}
 \bibpunct[, ]{(}{)}{,}{a}{}{,}%
 \def\bibfont{\small}%
\usepackage{graphicx}
\usepackage{booktabs}
\usepackage{float}
\usepackage{multirow}
\usepackage{subfigure}
\usepackage{hyperref}
\usepackage{algorithm}
\usepackage{threeparttable}
\usepackage{algorithmic}
\usepackage{makecell}
\usepackage{multirow}
\usepackage{amsmath,amssymb}
\usepackage{color}
\usepackage{tabularx}
\usepackage{makecell}
\TheoremsNumberedThrough     % Preferred (Theorem 1, Lemma 1, Theorem 2)
\EquationsNumberedThrough    % Default: (1), (2), ...
\begin{document}
%%%%%%%%%%%%%%%%

% Outcomment only when entries are known. Otherwise leave as is and 
%   default values will be used.
%\setcounter{page}{1}
%\VOLUME{00}%
%\NO{0}%
%\MONTH{Xxxxx}% (month or a similar seasonal id)
%\YEAR{0000}% e.g., 2005
%\FIRSTPAGE{000}%
%\LASTPAGE{000}%
%\SHORTYEAR{00}% shortened year (two-digit)
%\ISSUE{0000} %
%\LONGFIRSTPAGE{0001} %
%\DOI{10.1287/xxxx.0000.0000}%

% Author's names for the running heads
% Sample depending on the number of authors;
% \RUNAUTHOR{Jones}
% \RUNAUTHOR{Jones and Wilson}
% \RUNAUTHOR{Jones, Miller, and Wilson}
% \RUNAUTHOR{Jones et al.} % for four or more authors
% Enter authors following the given pattern:
%\RUNAUTHOR{}

% Title or shortened title suitable for running heads. Sample:
% \RUNTITLE{Bundling Information Goods of Decreasing Value}
% Enter the (shortened) title:
%\RUNTITLE{}

% Full title. Sample:
% \TITLE{Bundling Information Goods of Decreasing Value}
\TITLE{Cold-Start Forecasting of New Product Life-Cycles via Conditional Diffusion Models}

% Block of authors and their affiliations starts here:
% NOTE: Authors with same affiliation, if the order of authors allows, 
%   should be entered in ONE field, separated by a comma. 
%   \EMAIL field can be repeated if more than one author
\ARTICLEAUTHORS{
\AUTHOR{Ruihan Zhou}
\AFF{Guanghua School of Management, Peking University, Beijing 100871, China, \EMAIL{rhzhou@stu.pku.edu.cn}}

\AUTHOR{Zishi Zhang}
\AFF{Guanghua School of Management, Peking University, Beijing 100871, China, \EMAIL{zishizhang@stu.pku.edu.cn}}

\AUTHOR{Jinhui Han}
\AFF{Guanghua School of Management, Peking University, Beijing 100871, China, \EMAIL{jinhui.han@gsm.pku.edu.cn}}

\AUTHOR{Yijie Peng}
\AFF{Guanghua School of Management, Peking University, Beijing 100871, China, \EMAIL{pengyijie@pku.edu.cn}}

\AUTHOR{Xiaowei Zhang}
\AFF{Department of Industrial Engineering and Decision Analytics, The Hong Kong University of Science and Technology, Hong Kong SAR, China, \EMAIL{xiaoweiz@ust.hk}}
}
%\URL{}}
% Enter all authors
 % end of the block
\ABSTRACT{\textbf{Problem definition:} Forecasting the life-cycle trajectory of a newly launched product is important for launch planning, resource allocation, and early risk assessment. This task is especially difficult in the pre-launch and early post-launch phases, when product-specific outcome history is limited or unavailable, creating a cold-start problem. In these phases, firms must make decisions before demand patterns become reliably observable, while early signals are often sparse, noisy, and unstable. \textbf{Methodology/results:} We propose the Conditional Diffusion Life-cycle Forecaster (CDLF), a conditional generative framework for forecasting new-product life-cycle trajectories under cold start. CDLF combines three sources of information: static descriptors, reference trajectories from similar products, and newly arriving observations when available. Here, static descriptors refer to structured pre-launch characteristics of the product, such as category, price tier, brand or organization identity, scale, and access conditions. This structure allows the model to condition forecasts on relevant product context and to update them adaptively over time without retraining, yielding flexible multi-modal predictive distributions under extreme data scarcity. The method satisfies consistency with a horizon-uniform distributional error bound for recursive generation. Across studies on Intel microprocessor stock keeping unit (SKU) life cycles and the platform-mediated adoption of open large language model repositories, CDLF delivers more accurate point forecasts and higher-quality probabilistic forecasts than classical diffusion models, Bayesian updating approaches, and other state-of-the-art machine-learning baselines. \textbf{Managerial implications:} Our results show how generative forecasting can support managerial judgment in cold-start settings where conventional forecasting methods are least reliable. By producing full predictive distributions and updating them as new information arrives, CDLF helps managers assess launch-stage upside potential, downside risk, and forecast uncertainty in a timely and disciplined way. More broadly, the framework provides a foundation for decision-relevant forecasting in new product launches and other operational settings characterized by limited history and rapidly evolving information.}

% Sample
%\KEYWORDS{deterministic inventory theory; infinite linear programming duality; 
%  existence of optimal policies; semi-Markov decision process; cyclic schedule}

% Fill in data. If unknown, outcomment the field
\KEYWORDS{product life-cycle, demand forecasting, conditional generative model, diffusion modeling}
\maketitle

\section{Introduction}\label{section1}

Forecasting the future life-cycle of a newly launched product, often referred to as the \emph{cold-start} forecasting problem, is important in operations and marketing because consequential decisions must be made \emph{before} the product's own life-cycle pattern becomes observable \citep{Lei2023,Ban2018,hu2019}. In this setting, the goal is not merely to predict the next few observations from a short history, but to infer how the \emph{entire life-cycle} of a new product may evolve over time: whether adoption will rise gradually, peak early, decline quickly, or remain persistent. Figure~\ref{fig:lifecyclepipeline}, adapted from \cite{Lei2023}, summarizes the launch-stage information phases that motivate our study. In the \textit{Pre-Launch} phase, no outcome history is available, so prediction must rely on static product descriptors and analog information from related products. In the \textit{Early Post-Launch} phase, only a short and often noisy prefix is observed, which is usually insufficient to reveal the underlying life-cycle shape. Only in the \textit{Late Post-Launch} phase does the product's own history become informative enough for conventional time-series methods to extrapolate from realized observations. The central difficulty, therefore, is not simply limited data volume, but the absence of product-specific dynamic information at the very stage when managers need a credible forecast of the full life-cycle. The central challenge of cold-start forecasting has several distinct but related layers. First, in the pre-launch and early post-launch phases, product-specific observations are absent or extremely limited, making it difficult to forecast the initial stages of the life cycle with confidence. Second, even beyond the earliest stages, the evolution of the full life-cycle trajectory is difficult to infer from the focal product's own observed history alone. Life-cycle dynamics are path-dependent, stage-specific, and heterogeneous across products, so forecasting cannot be reduced to a standard extrapolation problem based only on within-product observations. Third, the task is inherently dynamic and distributional: the model must not only generate a plausible predictive distribution over the future trajectory under severe scarcity, but also update that distribution as new information arrives. Taken together, these challenges motivate the need for cross-product transfer through analog information and static descriptors, combined with adaptive updating as product-specific evidence gradually accumulates. More broadly, recent operations work also shows that transfer learning across related signals and aggregation levels can materially improve demand prediction when product-level information is limited \citep{Lei2024Transfer}.

\begin{figure}[htbp]
    \centering
    \includegraphics[width=0.8\linewidth]{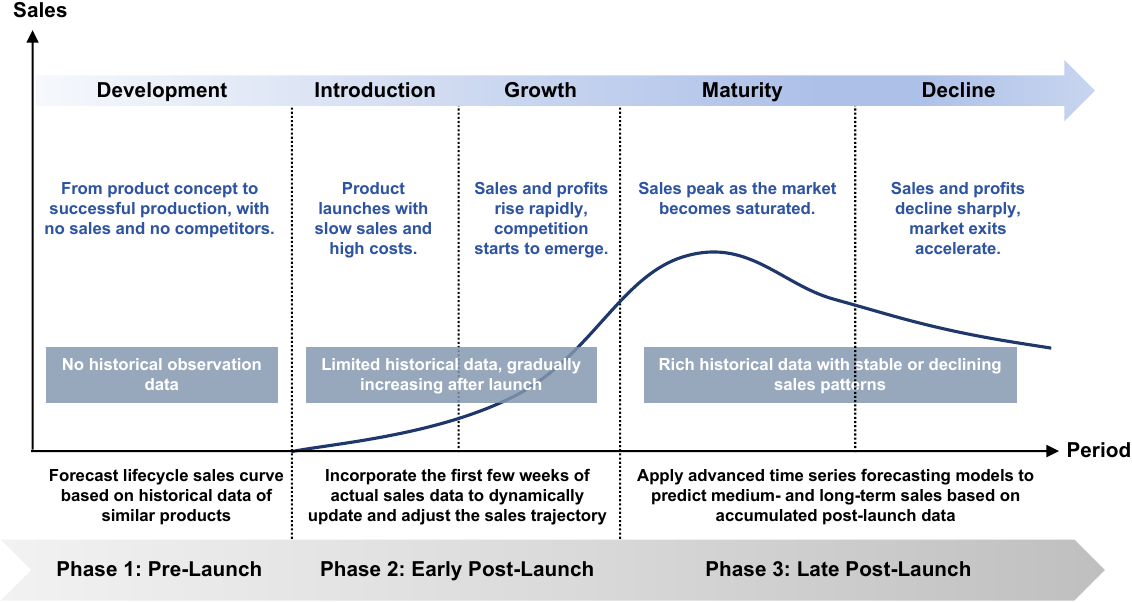}
    \caption{The pipeline of product sales life-cycle forecasting and the associated information phases.}
    \label{fig:lifecyclepipeline}
\end{figure}

An important and increasingly common setting for cold-start forecasting is the release of \emph{platform-mediated AI products}, such as open-source large language model (LLM) repositories, model checkpoints, developer toolkits on digital platforms, and related artifacts whose early adoption is often reflected in public engagement signals such as GitHub star counts. These products spread through platform discovery channels such as search, recommendation, and trending lists, which amplify exposure and drive adoption. Unlike traditional products, platform-mediated releases often exhibit adoption dynamics shaped by external factors such as ranking and recommendation systems. These systems can generate feedback loops that influence both the level and the timing of adoption. Adoption outcomes are typically skewed, with a small fraction of releases attracting the majority of attention, while most remain niche. As a result, forecasting these products requires handling unpredictable dynamics from the start, especially when only limited data is available. The main challenge for managers is not just improving short-term predictions, but also supporting \emph{early planning} with minimal evidence. Decisions such as how much traffic-acquisition effort to allocate, whether to prepare evaluation or hosting capacity for potential surges in adoption, and how aggressively to sequence promotional or ecosystem-support actions must be made with substantial uncertainty about how the product’s life-cycle will evolve. To address these challenges, we need a forecasting framework that can handle sparse early data, adapt to evolving information, and provide flexible and coherent predictions of the product's future life-cycle. 

\subsection{From pre-specified forecasts to predictive distributions over product life-cycles}\label{sec:intro_gap}

Existing approaches to new-product forecasting provide certain ways to make predictions when the target product has little or no history. They do so by introducing structure, borrowing information from similar historical products, or combining both sources of information.These approaches are valuable because they enable forecasting in cold-start settings and often retain an interpretable structure that is useful for analysis and decision support. At the same time, many existing approaches are less well suited to the setting we study. In cold-start environments, the forecast is often partly determined in advance, either by a restricted family of life-cycle shapes or by patterns drawn from similar historical products. This can be useful when the target product follows familiar launch patterns. However, it is more limiting when the product is novel and its life-cycle depends on information that is revealed only gradually after launch. In particular, when only a short and noisy prefix is available, such approaches may not adapt flexibly enough to how the specific product's life-cycle is beginning to evolve.

This limitation is particularly consequential in platform-mediated release settings, where forecast quality depends on the ability to characterize not only expected adoption, but also uncertainty over the future trajectory. It is to characterize the range of plausible life-cycles that may emerge for a newly released product. For example, two products with similar early signals may still differ substantially in whether adoption peaks early or late, declines quickly or persists, and remains modest or grows to an unusually large scale. Accordingly, this paper asks and try to solve the key problem:
\emph{How can we construct a unified cold-start forecasting framework that generates predictive distributions over full product life-cycles, while adapting to evolving information from no observations to short prefixes?}

\subsection{A unified recursive generative forecaster for cold-start life-cycle prediction}\label{sec:intro_method_contrib}

We propose the \emph{Conditional Diffusion Life-cycle Forecaster (CDLF)}, a unified forecasting framework for cold-start product launches. Here, diffusion refers to \emph{diffusion probabilistic models} in machine learning, rather than to adoption-based diffusion models such as Bass. CDLF generates a \emph{predictive distribution} over the full future life-cycle in the two most important information phases in Figure~\ref{fig:lifecyclepipeline}. Methodologically, CDLF extends diffusion-based probabilistic forecasting to a cold-start setting in which the target product has little or no within-product history when forecasts are required \citep{Rasul2021,Shen2023}. The model conditions future life-cycle generation on three information sources: static product descriptors, a reference set of related historical life-cycles, and a short observed prefix when available. Static product descriptors are encoded into a fixed representation of the target product's cross-sectional characteristics. The reference set is used to construct a transferable summary of plausible life-cycle patterns from related historical products. When early observations become available, the observed prefix is encoded as a product-specific dynamic state. These components are then combined to form the conditioning context for recursive generation of the future life-cycle.

This conditioning structure serves two purposes. First, in the \textit{Pre-Launch} phase, when no within-product history is available, forecasts can still be anchored by static product descriptors and cross-entity information from analogous products. Second, in the \textit{Early Post-Launch} phase, the same framework updates the predictive distribution by incorporating the realized prefix of the target product as product-specific information begins to emerge. In this way, the model does not simply interpolate among historical patterns; it revises the forecast as new evidence is revealed and propagates that information through the remaining life-cycle. Unlike approaches that either impose a fixed family of life-cycle shapes or update forecasts through separate retrieval and blending steps, CDLF combines cross-entity transfer and product-specific updating within a single conditional generative framework.

This feature is especially important because many quantities of interest depend on the joint path distribution of the life-cycle over time, including cumulative adoption, peak magnitude, peak timing, and the probability that adoption exceeds a service threshold. Such quantities cannot be characterized well by point forecasts at individual horizons, because they depend on how future outcomes co-move across time. By generating coherent samples of complete future life-cycles conditional on the current information set, CDLF represents both cross-product regularities and product-specific updating within a single probabilistic forecasting framework. We do not attempt to causally identify platform exposure effects such as ranking or featuring. Instead, we treat platform-mediated shocks as part of the uncertainty to be forecast and use the resulting predictive distribution to summarize tail-sensitive launch risk.

Beyond methodology, we provide theoretical guarantees for recursive conditional generation, establishing horizon-uniform control of multi-step distributional error under a stability margin condition. Empirically, we evaluate CDLF in two complementary studies. The first uses a canonical benchmark on physical-product life-cycles and allows comparison with established forecasting baselines in a familiar setting. The second studies the adoption of LLM repositories on a digital platform, where outcomes are highly skewed, exposure-driven, and often bursty. This setting is substantively important in its own right, and it also provides a demanding test bed for cold-start forecasting because products with similar early signals may later exhibit very different life-cycle patterns. Across both studies, we compare CDLF with classical diffusion models, analog-based approaches, and modern probabilistic time-series baselines. We show that CDLF improves both point and distributional forecast performance, while providing predictive life-cycle samples that better capture peak-related and tail-sensitive features of launch dynamics.

The remainder of this paper is organized as follows. Section~\ref{secreview} reviews related work. Section~\ref{sec2} formalizes cold-start life-cycle forecasting and the evolving information phases in Figure~\ref{fig:lifecyclepipeline}. Section~\ref{sec3} presents CDLF, its recursive conditional generation procedure, and the theoretical analysis. Section~\ref{sec5} reports empirical results and benchmark comparisons in two studies, emphasizing both distributional accuracy and event-level life-cycle summaries relevant to launch planning. Section~\ref{sec6} concludes and discusses implications and future directions.

\section{Related Work}\label{secreview}

Our work lies at the intersection of three literature streams: cold-start new-product forecasting, probabilistic and generative models for time-series forecasting, and research on platform-mediated exposure dynamics with heavy-tailed and bursty adoption outcomes. We briefly review each stream and highlight the methodological gap addressed by CDLF.

\subsection{Cold-start New Product Forecasting and Diffusion Models}

New-product demand forecasting is central in marketing and operations because inventory, capacity, and marketing decisions must often be made with little or no product-specific history \citep{hu2019,Lei2023}. More broadly, operations research has also studied learning-and-control problems in which demand information is incomplete or only partially observed, so firms must update operational decisions from limited sales data rather than from a fully observed demand process \citep{chen2021nonparametric}. A classic approach in cold-start forecasting is provided by diffusion models, exemplified by the Bass model, which link adoption to innovation and imitation forces in an interpretable way \citep{Bass1969}. This literature has been extended to accommodate heterogeneity, competition, and richer market environments \citep{1990New,PeresMullerMahajan2010DiffusionReviewIJRM,2002An,Bemmaor2002}. These models are valuable as structural descriptions of adoption dynamics, but for cold-start forecasting they typically rely on restrictive functional-form assumptions and careful early-stage calibration. As a result, they are most effective when the future life-cycle can be represented reasonably well by a pre-specified family of adoption curves. This can be limiting when launch outcomes are highly skewed, shaped by platform exposure, or subject to abrupt changes in timing and scale.

A complementary stream of research develops data-driven methods for forecasting new product life-cycles by borrowing information across products. \cite{hu2019} develop a practical framework for forecasting new product life-cycle curves from historical data. \cite{Lei2023} propose a Bayesian functional approach that incorporates product attributes and promotion information into life-cycle forecasting. \cite{Guo2025} develop a Bayesian ensemble approach for forecasting the full demand distribution of new products across launch stages. Related operations work also studies transfer learning for demand prediction by pooling information across related signals and aggregation levels, highlighting a broader route for cross-entity information sharing under data scarcity \citep{Lei2024Transfer}. More broadly, these approaches use analog products, covariates, pooling, or ensemble learning to anchor forecasts when the target product has limited history. Their key advantage is cross-entity transfer: information from related products can still inform the forecast when the target product has little history. However, the resulting forecast often remains tied to previously observed life-cycle patterns or a restricted family of admissible shapes. When early observations arrive, updates are commonly implemented through matching, weighting, pooling, or other modular adjustments, rather than through a unified mechanism that updates the full predictive distribution of the future life-cycle. This limitation is especially important when short prefixes are noisy and later outcomes may diverge sharply despite similar starts.

Our work shares the goal of leveraging static product descriptors, reference life-cycles, and short observed prefixes when available, but differs in both forecasting object and updating mechanism. Rather than producing a point forecast, an expected life-cycle, or a weighted combination of candidate curves, we target a \emph{conditional predictive distribution} over the full future life-cycle. This distribution is generated within a single recursive conditional framework that incorporates cross-product information before launch and updates as product-specific evidence emerges after launch. This design is especially useful under severe data scarcity because it represents uncertainty about how the life-cycle may unfold, rather than restricting the forecast to a fixed functional form or a finite library of previously observed patterns.

\subsection{Probabilistic and Generative Models for Time-series Forecasting}

Recent work in probabilistic time-series forecasting increasingly uses generative models to represent predictive distributions beyond simple parametric noise assumptions. It is important to distinguish these models from the diffusion models discussed in the marketing literature above. In marketing, diffusion models typically refer to adoption models, such as the Bass framework, that describe how demand spreads over time. By contrast, in machine learning, diffusion probabilistic models are generative models that represent complex predictive distributions through iterative denoising. TimeGrad \citep{Rasul2021} introduced an autoregressive diffusion-based forecaster and demonstrated improved uncertainty calibration, and subsequent studies have adapted diffusion architectures to alternative temporal structures and data sparsity settings \citep{yan2021,Shen2023}. Related conditional generative approaches, including GAN- and VAE-based models, have also been studied for stochastic simulation and uncertainty-aware forecasting \citep{Yoon2019,cont2022tail}. In quantitative finance, diffusion and score-based models have been used for scenario generation in heavy-tailed time series and structured objects such as implied-volatility surfaces \citep{TakahashiMizuno2025SyntheticFTS,ChenXuXuZhang2025DiffusionFactorModels,JinAgarwal2025IVDiffusion}. However, these settings typically assume mature histories for a fixed set of assets and do not address cross-entity cold-start forecasting.

The gap, therefore, is not probabilistic forecasting per se, but probabilistic forecasting under cross-entity cold-start. In our setting, the target product may have no observed history at launch and only a short, noisy prefix shortly thereafter. This requires a forecasting model that conditions on static product descriptors and reference life-cycles from related products, while updating coherently as product-specific evidence accumulates. Existing methods satisfy only part of this requirement. Analog-based and cross-product pooling approaches can transfer information across products, but they generally do not provide a unified probabilistic mechanism for generating and updating coherent future life-cycles. Diffusion forecasters can generate coherent samples of future trajectories, but they are typically developed for settings with substantial within-entity history. Table~\ref{tab:req_vs_baselines} summarizes this gap in terms of four design requirements for cold-start life-cycle forecasting.

\begin{table}[htp]
\centering
\small
\caption{Baseline families and cold-start design requirements.
R1: no within-product history (Pre-Launch);
R2: conditions on analog life-cycles and static product descriptors;
R3: generates coherent life-cycle samples (supporting peak timing, peak magnitude, and cumulative adoption);
R4: uses a single model that updates as a prefix arrives.}
\label{tab:req_vs_baselines}
\begin{tabular}{lcccc}
\hline
Method family & R1 & R2 & R3 & R4 \\
\hline
Bass / parametric diffusion \citep{Bass1969,PeresMullerMahajan2010DiffusionReviewIJRM} & Limited & Limited & No & Limited \\
\makecell[l]{Analog-based / Bayesian ensemble methods 
\\ \citep{hu2019,Lei2023,Guo2025}} & Yes & Yes & Limited & Limited \\
Diffusion forecasters \citep{Rasul2021,Shen2023} & No & No & Yes & No \\
\textbf{CDLF (ours)} & \textbf{Yes} & \textbf{Yes} & \textbf{Yes} & \textbf{Yes} \\
\hline
\end{tabular}
\vspace{0.25em}

\begin{minipage}{0.95\linewidth}
\footnotesize
\textit{Notes.} ``Limited'' indicates partial support, for example through strong functional-form assumptions, reliance on finite libraries of historical life-cycle patterns, modular updating procedures, or limited adaptability across information phases.
\end{minipage}
\end{table}

Motivated by this gap, our framework conditions future life-cycle generation on three information sources: static product descriptors, a reference set of analog life-cycles, and an evolving early prefix when available. This design combines cross-product transfer and sequential updating within a single probabilistic framework, producing coherent samples of full future life-cycles under both Pre-Launch and Early Post-Launch information phases. More broadly, under information scarcity, uncertainty can be handled through either flexible generative prediction or Bayesian posterior-based updating; related methodological work also studies posterior-based sequential rules for uncertainty-aware evaluation, although not in a forecasting setting per se \citep{Rasul2021,Shen2023,Lei2023,Eckman2022Posterior}.
\subsection{Platform-mediated Exposure Dynamics and Heavy-tailed, Bursty Adoption Outcomes}
In platform-mediated digital releases, adoption is shaped not only by product characteristics but also by platform discovery and exposure processes. Evidence from digital channels shows that platform-side interventions can materially affect users' adoption decisions and subsequent usage outcomes, reinforcing the role of the surrounding information environment in shaping early adoption trajectories \citep{sun2019motivating}. Ranking and recommendation systems can generate feedback effects that influence what users see and subsequently adopt \citep{FlederHosanagar2009RecommendersDiversity}, while online diffusion dynamics often exhibit bursty responses and heterogeneous decay patterns consistent with discrete exposure shocks \citep{CraneSornette2008ResponseFunction}. Related work further shows that diffusion scale and diffusion structure need not coincide \citep{GoelAndersonHofmanWatts2016StructuralVirality}. Together, these findings suggest that the adoption life-cycle of a newly released digital product may vary not only in scale, but also in peak timing, persistence, and the speed of takeoff and decline.

These exposure mechanisms interact with highly uneven adoption outcomes. In scalable digital markets, a small fraction of releases often accounts for a disproportionate share of total adoption, consistent with superstar and long-tail patterns \citep{Rosen1981Superstars,Anderson2006LongTail,BrynjolfssonHuSmith2003ConsumerSurplus}. This feature matters for forecasting because heavy-tailed adoption outcomes are often poorly summarized by a single expected life-cycle or by models built around a restricted family of smooth adoption curves. Delayed takeoff, abrupt surges, unusually large peaks, and persistently high adoption may therefore be difficult to capture through point forecasts or finite libraries of representative historical patterns.

These considerations reinforce the need for a forecasting framework that models the full predictive distribution of the future adoption life-cycle rather than only its conditional mean. This need is especially pronounced in cold-start settings, where the forecast must initially rely on cross-product regularities and then update as short observed prefixes begin to reveal whether the target product is following an ordinary or an extreme adoption pattern. A generative forecasting framework is therefore attractive because it can represent flexible, non-Gaussian, and potentially multimodal uncertainty over future adoption life-cycles while producing coherent life-cycle samples under evolving information sets.

\section{Problem Formulation}\label{sec2}

This study considers the problem of cold-start forecasting for product life-cycle trajectories.
A product is in a cold-start phase when little or no product-specific outcome history is available.
This setting is common at launch.
Before release, no within-product observations exist.
Shortly after release, only a brief and often noisy prefix is observed.
As a result, conventional time-series methods that rely mainly on long product-specific histories are not well suited to this setting.

The difficulty of cold-start forecasting has two distinct aspects.
The first is \emph{pre-launch forecasting}.
At this stage, the goal is to predict the initial and subsequent evolution of a new product when no product-specific observations have yet been collected.
The second is \emph{early post-launch forecasting}.
At this stage, a short prefix of the focal product becomes available.
The goal is then to update the forecast by combining this emerging product-specific evidence with information learned from other products.
Therefore, cold-start forecasting is not only a problem of prediction under missing history, but also a problem of progressive adaptation as product-specific information accumulates.

Let $\mathbf{x}_t \in \mathbb{R}^D$ denote the vector of observed outcomes at time $t$, where $D$ is the number of outcome dimensions.
The full life-cycle trajectory over a horizon of length $T$ is $\mathbf{x}_{1:T} = (\mathbf{x}_1,\mathbf{x}_2,\dots,\mathbf{x}_T) \in \mathbb{R}^{D \times T}$. Equivalently, if $x_{i,t}\in\mathbb{R}$ denotes the $i$th outcome component at time $t$ for $i \in \{1,\dots,D\}$, then $\mathbf{x}_t = (x_{1,t},x_{2,t},\dots,x_{D,t})$. Our objective is to model the distribution of the life-cycle trajectory $\mathbf{x}_{1:T}$. For each focal product, we observe a vector of static attributes $\mathbf{s}\in\mathbb{R}^p$. These attributes describe time-invariant characteristics of the product and its launch environment, such as category, price tier, brand, or other pre-launch attributes after preprocessing and encoding.
In this paper, $\mathbf{s}$ is static over the forecasting horizon. It is not a dynamic state variable.
Cold-start forecasting is inherently cross-product.
To support inference for a new product, we use a reference library
\[
\mathcal{X}
=
\left\{
\left(\mathbf{x}^{(1)}_{1:T},\mathbf{s}^{(1)}\right),
\dots,
\left(\mathbf{x}^{(K)}_{1:T},\mathbf{s}^{(K)}\right)
\right\},
\]
where each $\mathbf{x}^{(k)}_{1:T}\in\mathbb{R}^{D\times T}$ denotes the complete life-cycle trajectory of a historical product, and each $\mathbf{s}^{(k)}\in\mathbb{R}^p$ denotes its static attributes.
The reference products are selected to be similar to the focal product based on these static characteristics.
The detailed construction of the reference set and the corresponding similarity measure are presented in Section~\ref{sec31}.

The pre-launch information available for a focal product therefore has two components.
The first is the static attribute vector $\mathbf{s}$.
The second is the historical reference set $\mathcal{X}$.
Throughout this paper, we use the term \emph{contextual information} to refer to this pre-launch information.
Thus, contextual information does not replace product-specific data.
Instead, it provides the information available before such data exist, and it remains useful later when forecasts are updated using the focal product's own observed prefix.

To formalize the information available over time, we define a filtration $\{\mathcal{F}_t\}_{t=0}^T$ by
\[
\mathcal{F}_0 = \sigma(\mathcal{X},\mathbf{s}),
\qquad
\mathcal{F}_t = \sigma(\mathbf{x}_{1:t},\mathcal{X},\mathbf{s}),\quad t=1,2,\dots,T.
\]
Here, $\mathcal{F}_0$ represents the pre-launch information set, when no product-specific outcomes have been observed.
For $t \ge 1$, the information set expands to include the observed prefix $\mathbf{x}_{1:t}$ of the focal product.
By construction,
\[
\mathcal{F}_0 \subseteq \mathcal{F}_1 \subseteq \cdots \subseteq \mathcal{F}_T.
\]
The forecasting target is the conditional distribution of the future trajectory given the information available up to the forecast origin.
For any $t_0\in\{1,\dots,T\}$, we aim to learn
\begin{equation}\label{eq:target_future_conditional}
q(\mathbf{x}_{t_0:T}\mid \mathcal{F}_{t_0-1})
=
\prod_{t=t_0}^T q(\mathbf{x}_t\mid \mathcal{F}_{t-1}).
\end{equation}

This formulation unifies the two cold-start stages. In the \emph{pre-launch} stage, $t_0=1$ and the model must forecast the full life-cycle trajectory without any product-specific observations.
The corresponding target distribution is 
$q(\mathbf{x}_{1:T}\mid \mathcal{F}_0)
=
q(\mathbf{x}_1\mid \mathcal{F}_0)\prod_{t=2}^T q(\mathbf{x}_t\mid \mathbf{x}_{1:t-1},\mathcal{F}_0)$.
At this stage, prediction relies entirely on static attributes of the focal product and life-cycle patterns learned from comparable historical products. In the \emph{early post-launch} stage, $t_0>1$ and a short prefix $\mathbf{x}_{1:t_0-1}$ has become available.
The forecasting problem then changes.
The model should no longer rely only on cross-product regularities.
It should also adapt to the focal product by incorporating its observed prefix into the conditioning information.
In this sense, early post-launch forecasting is an adaptation problem: the model uses shared patterns learned from historical products, while refining the forecast using the focal product's own emerging trajectory.

This distinction is central to our formulation.
Before launch, no product-specific evidence exists, so inference must be anchored by $\mathbf{s}$ and $\mathcal{X}$.
After launch, product-specific evidence gradually accumulates, and the forecast should be updated accordingly.
A useful cold-start forecasting model must therefore perform well in both phases: it must generate reasonable life-cycle forecasts when no within-product history is available, and it must adapt those forecasts as limited product-specific observations arrive. In this work, we focus on the core forecasting target, namely the directly observed outcome trajectory $\mathbf{x}_{1:T}$.
We do not incorporate additional auxiliary behavioral signals.
The next section introduces our conditional generative model for approximating the distributions in \eqref{eq:target_future_conditional}.

\section{Methodology}\label{sec3}

This section presents our methodology for cold-start life-cycle forecasting. We begin with an overview of the proposed framework and highlight the problem-specific design choices that address the pre-launch and early post-launch phases. We then describe how contextual information is encoded and summarized into an evolving latent state, and finally introduce the conditional diffusion model and the joint training procedure. The theoretical properties of the proposed framework are analyzed in Section~\ref{sec:theory_main_bilingual}, where we establish forecasting error bounds under suitable stability conditions.
\subsection{Method Overview and Problem-specific Design}\label{sec30}

In this section, we formulate cold-start forecasting as a conditional generative problem. Our objective is to model the conditional distribution of the remaining life-cycle trajectory given the information available at the forecast origin. Formally, for a forecast origin $t_0$, the target is the conditional distribution of future outcomes from $t_0$ to $T$. This formulation unifies the two cold-start phases considered in this paper. To address both phases within a single framework, we develop a unified conditional generative model in which future observations are generated one step at a time. At each step, the predictive distribution is represented by a conditional denoising diffusion model.

The key difficulty of cold-start forecasting is not only uncertainty in future outcomes, but also the limited and evolving nature of the information available for prediction. In the pre-launch phase, no product-specific outcome history is available, so forecasting must rely on the static descriptors of the focal product and information borrowed from comparable historical products. In the early post-launch phase, a short observed prefix becomes available, and the model must adapt its forecasts to the focal product while still leveraging cross-product regularities. Our framework is designed to address these two phases within a single model.

The core of our method is therefore not diffusion per se, but a unified conditional generative framework with an evolving latent information state. This design has three components. First, we encode pre-launch contextual information, including the static descriptors of the focal product and reference trajectories from similar historical products, into compact embeddings. Second, we use an autoregressive state-transition mechanism to update the latent state as product-specific observations accumulate. Third, conditioned on the current latent state, we use a denoising diffusion model to generate the next-period outcome vector. These three components together allow the model to support cross-product transfer under no history, product-specific adaptation once a short prefix becomes available, and distributional forecasting throughout the life-cycle.

This temporal setting differs fundamentally from standard diffusion models developed for image generation. In image generation, dependencies are primarily spatial. In forecasting, by contrast, observations are ordered over time, and predictions must be consistent with this ordering. This distinction is especially important in cold-start settings, where little or no product-specific outcome history is available. Figure~\ref{fig:methodcompare} illustrates the difference between standard diffusion models for images and our proposed CDLF framework for cold-start product life-cycle forecasting. Unlike approaches that condition only on an observed time-series prefix, CDLF is designed to incorporate a richer information set. This information includes static descriptors of the focal product, reference trajectories from similar historical products, and, when available, a short observed prefix of the focal product. This temporal setting differs fundamentally from standard diffusion models developed for image generation. In image generation, dependencies are primarily spatial. In forecasting, by contrast, observations are ordered over time, and predictions must be consistent with this ordering. This distinction is especially important in cold-start settings, where little or no product-specific outcome history is available. We next describe how this information is encoded and organized into the latent state that drives the forecasting process.
\begin{figure}[htp]
    \centering
    \includegraphics[width=0.6\linewidth]{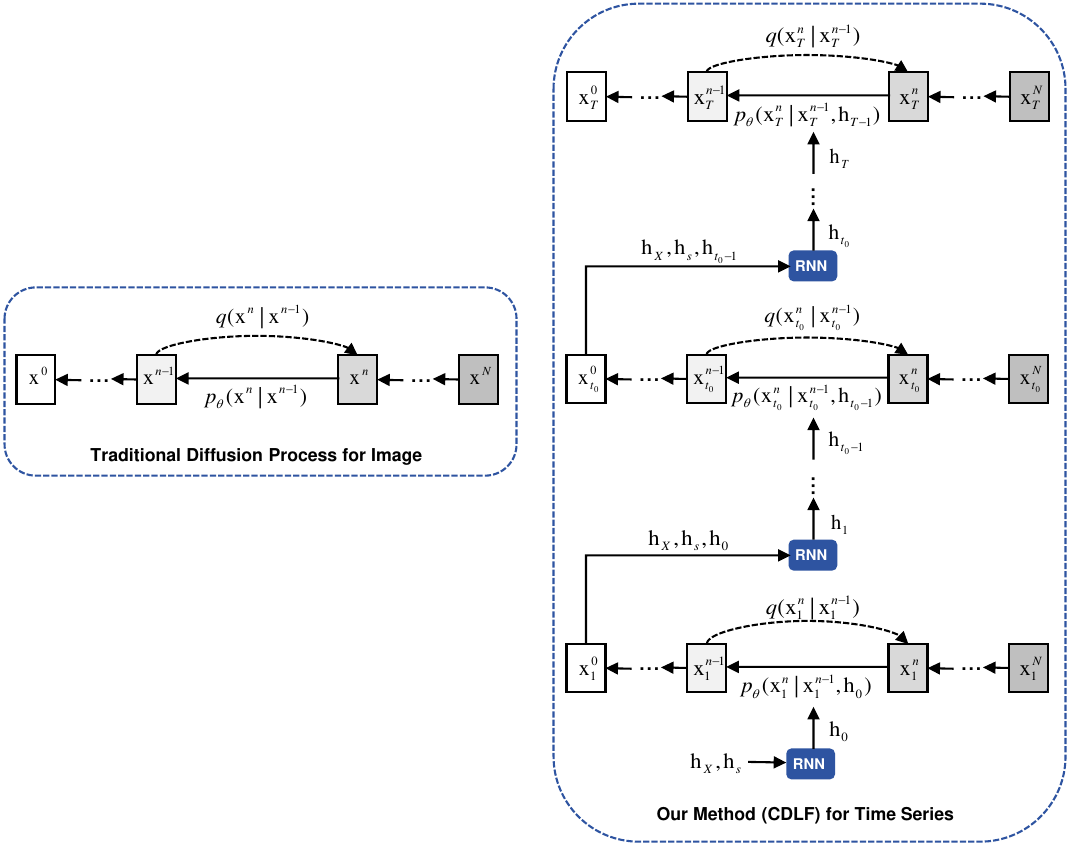}
    \caption{Comparison Schematic of Diffusion Modeling Paradigms for Images and Time Series Forecasting}
    \label{fig:methodcompare}
\end{figure}

\subsection{Contextual Information Encoding and State Construction}\label{sec31}

We now describe how the information introduced in Section~\ref{sec2} is encoded and incorporated into the forecasting model. Let $\mathbf{s}\in\mathbb{R}^p$ denote the static descriptors of the focal product, and let $\mathcal{X}=\{(\mathbf{x}^{(k)}_{1:T}, \mathbf{s}^{(k)})\}_{k=1}^{K}$ denote a reference library of comparable historical product trajectories. Each reference product is also associated with its own static descriptors $\mathbf{s}^{(k)}$. We encode this information into two fixed contextual embeddings, $\mathbf{h}_{\mathcal X}$ and $\mathbf{h}_s$, and then use a recurrent hidden state $\mathbf{h}_t$ to summarize the information available up to time $t$.

This construction is motivated directly by the cold-start setting. In the pre-launch phase, static descriptors matter because they provide product-level information before any product-specific observations are available. Reference products matter because, under no history, cross-product regularities are the main source of information for anticipating the likely shape of the focal product's life-cycle. Once an observed prefix becomes available, a recurrent state is needed to summarize the evolving product-specific information and to update the forecast accordingly. Our state construction is designed to combine these three roles within a single representation.

We first construct the reference set by comparing the static descriptors of the focal product with those of candidate historical products. Specifically, for each candidate product $k$, we compute the similarity score $\mathrm{sim}(k)=-\|\mathbf{s}^{(k)}-\mathbf{s}\|_2^2$. We then select the top $K$ historical products with the highest similarity scores, or equivalently, the smallest Euclidean distances in the descriptor space.

For each selected reference product, we encode its life-cycle trajectory $\mathbf{x}^{(k)}_{1:T}$ using a Recurrent Neural Network (RNN) encoder, which maps the trajectory to a fixed-dimensional representation $\mathbf{h}_k$. This encoder is suitable because historical trajectories may differ in effective length and temporal pattern, while the downstream forecasting model requires a common latent representation.

To reflect the relevance of different reference products, we assign soft importance weights through a softmax transformation:
\[
\omega_k=
\frac{\exp\!\left(-\gamma\|\mathbf{s}^{(k)}-\mathbf{s}\|_2^2\right)}
{\sum_{j=1}^{K}\exp\!\left(-\gamma\|\mathbf{s}^{(j)}-\mathbf{s}\|_2^2\right)}.
\]
Rather than using $\omega_k$ as a multiplicative scalar on $\mathbf{h}_k$, we concatenate the weight with the reference representation and then apply a learnable projection:
\[
\tilde{\mathbf{h}}_k
=
\mathrm{ReLU}\!\left(
\mathbf{W}_{\omega}[\mathbf{h}_k;\omega_k]+\mathbf{b}_{\omega}
\right).
\]
We then aggregate the resulting reference representations through
\[
\mathbf{h}_{\mathcal{X}}
=
\mathrm{RNN}_{\mathrm{ref}}
\!\left(
\{\tilde{\mathbf{h}}_1,\tilde{\mathbf{h}}_2,\dots,\tilde{\mathbf{h}}_K\}
\right).
\]
Here, $[\cdot;\cdot]$ denotes concatenation, and the projection matrix $\mathbf{W}_{\omega}$ maps the $(d+1)$-dimensional concatenated vector back to $d$ dimensions so that the interface with the downstream aggregator remains unchanged. When applying $\mathrm{RNN}_{\mathrm{ref}}(\cdot)$, we order the reference products by $\omega_k$ in descending order. This design allows the model to learn a flexible nonlinear interaction between similarity information and reference content, rather than forcing similarity to act only through direct multiplicative scaling. As shown in Appendix~\ref{app:weight_fusion_2ways}, this concatenation-based fusion performs better than the alternative multiplicative weighting scheme.

We next encode the static descriptors $\mathbf{s}$ of the focal product into a fixed-dimensional embedding $\mathbf{h}_s$ through a feedforward neural network. Both $\mathbf{h}_{\mathcal X}$ and $\mathbf{h}_s$ remain fixed during the forecasting horizon. Together, they summarize the pre-launch contextual information available before product-specific observations are incorporated.

To summarize evolving product-specific information, we use an autoregressive recurrent encoder. In general, this encoder can be implemented as either a Long Short-Term Memory (LSTM) network or a Gated Recurrent Unit (GRU). Since the GRU is used repeatedly in the remainder of the paper, we briefly note its role here. A GRU is a gated recurrent architecture that updates the hidden state through reset and update gates. This structure allows the model to preserve relevant past information while efficiently incorporating newly observed outcomes. In our implementation, the recurrent state encoder is instantiated as a GRU, although we retain the generic notation $\mathrm{RNN}_{\phi}$ for exposition.

Let $\mathbf{c}:=[\mathbf{h}_{\mathcal{X}};\mathbf{h}_s]$. We use the contextual embedding both to initialize the recurrent state and to provide persistent contextual input during the sequential update. Specifically, the initial hidden state is defined as $\mathbf{h}_0=\mathbf{W}_0\mathbf{c}+\mathbf{b}_0$, where $\mathbf{W}_0$ maps the contextual embedding to the GRU hidden dimension. At each time step $t$, the hidden state is updated by combining the current observation $\mathbf{x}_t$ with the fixed contextual embedding $\mathbf{c}$:
\begin{equation}\label{eq:def_f_theta}
\mathbf{h}_t
=
f_{\phi}(\mathbf{h}_{t-1},\mathbf{x}_t,\mathbf{c})
:=
\mathrm{RNN}_{\phi}\!\left([\mathbf{x}_t;\mathbf{c}],\mathbf{h}_{t-1}\right).
\end{equation}
This construction ensures that static and cross-product contextual information remain available throughout the sequential update process, rather than relying only on a one-time initialization. It is precisely this evolving latent state that enables the model to move from cross-product transfer in the pre-launch phase to product-specific adaptation in the early post-launch phase.

\subsection{Conditional Diffusion and Joint Training}\label{sec42}

We now describe the conditional diffusion model and its joint training with the state-transition encoder. Throughout this subsection, $q(\cdot)$ denotes the fixed forward noising process, while $p_{\theta}(\cdot)$ denotes the learned reverse denoising model. For each forecasting step $t\ge t_0$, let $\mathbf{x}_t^{0}\in\mathbb{R}^{D}$ denote the clean outcome vector at time $t$, and let $\mathbf{x}_t^{0:N}$ denote the associated diffusion chain over noise levels $n=0,1,\ldots,N$. The reverse denoising process is conditioned on the latent state $\mathbf{h}_{t-1}$ constructed in Section~\ref{sec31}.

The diffusion component models the uncertainty of the next-period outcome, while the latent state ensures that generation remains consistent with the information available at the current forecast origin. This conditioning is the main feature that makes the diffusion model specific to the cold-start forecasting problem. The generated outcome at time $t$ depends not only on the noisy diffusion state itself, but also on the static descriptors, the reference-product information, and, when available, the observed prefix summarized by $\mathbf{h}_{t-1}$.

The forward diffusion process is defined as a fixed Gaussian Markov chain:
\[
q(\mathbf{x}_t^{n}\mid \mathbf{x}_t^{n-1})
=
\mathcal{N}\!\left(
\mathbf{x}_t^{n};
\sqrt{1-\beta_n}\,\mathbf{x}_t^{n-1},
\beta_n\mathbf{I}
\right),
\]
with variance schedule $\{\beta_n\}_{n=1}^{N}$ and cumulative product 
$\bar{\alpha}_n=\prod_{i=1}^{n}(1-\beta_i)$. It follows that the forward marginal at any diffusion step $n$ has the closed form
\begin{equation}\label{eq31}
q(\mathbf{x}_t^{n}\mid \mathbf{x}_t^{0})
=
\mathcal{N}\!\left(
\mathbf{x}_t^{n};
\sqrt{\bar{\alpha}_n}\,\mathbf{x}_t^{0},
(1-\bar{\alpha}_n)\mathbf{I}
\right),
\qquad
\mathbf{x}_t^{n}
=
\sqrt{\bar{\alpha}_n}\,\mathbf{x}_t^{0}
+
\sqrt{1-\bar{\alpha}_n}\,\boldsymbol{\epsilon},
\end{equation}
where $\boldsymbol{\epsilon}\sim\mathcal{N}(\mathbf{0},\mathbf{I})$.

Under the standard DDPM construction, the posterior transition
$q(\mathbf{x}_t^{n-1}\mid \mathbf{x}_t^{n},\mathbf{x}_t^{0})$
is Gaussian with tractable mean and variance:
\[
q(\mathbf{x}_t^{n-1}\mid \mathbf{x}_t^{n},\mathbf{x}_t^{0})
=
\mathcal{N}\!\left(
\mathbf{x}_t^{n-1};
\widetilde{\mu}_n(\mathbf{x}_t^{n},\mathbf{x}_t^{0}),
\sigma_n^2\mathbf{I}
\right),
\]
where
\begin{equation}\label{eq4}
\widetilde{\mu}_n(\mathbf{x}_t^{n},\mathbf{x}_t^{0})
:=
\frac{\sqrt{\bar{\alpha}_{n-1}}\beta_n}{1-\bar{\alpha}_n}\mathbf{x}_t^{0}
+
\frac{\sqrt{\alpha_n}(1-\bar{\alpha}_{n-1})}{1-\bar{\alpha}_n}\mathbf{x}_t^{n},
\qquad
\sigma_n^2
:=
\frac{1-\bar{\alpha}_{n-1}}{1-\bar{\alpha}_n}\beta_n.
\end{equation}

In practice, neither the clean target $\mathbf{x}_t^{0}$ nor the injected noise $\boldsymbol{\epsilon}$ is observed along the reverse chain. We therefore parameterize the reverse denoising transition by a neural noise-prediction network $s_{\theta}$, also referred to as a score network in the diffusion literature. In contrast to unconditional diffusion models, our reverse process is conditioned on the autoregressive latent state $\mathbf{h}_{t-1}$:
\[
p_{\theta}(\mathbf{x}_t^{n-1}\mid \mathbf{x}_t^{n},\mathbf{h}_{t-1})
=
\mathcal{N}\!\left(
\mathbf{x}_t^{n-1};
\mu_{\theta}(\mathbf{x}_t^{n},\mathbf{h}_{t-1},n),
\sigma_n^2\mathbf{I}
\right).
\]
Using the standard DDPM reparameterization, the reverse mean can be written as
\begin{equation}\label{eq6}
\mu_{\theta}(\mathbf{x}_t^{n},\mathbf{h}_{t-1},n)
=
\frac{1}{\sqrt{\alpha_n}}
\left(
\mathbf{x}_t^{n}
-
\frac{\beta_n}{\sqrt{1-\bar{\alpha}_n}}
s_{\theta}(\mathbf{x}_t^{n},\mathbf{h}_{t-1},n)
\right).
\end{equation}

We implement the score network $s_{\theta}$ as a one-dimensional dilated causal convolutional neural network with $L$ residual blocks. At time $t$, the network takes as input a noisy local window
\[
\mathbf{X}_t^{n}
=
[\mathbf{x}_{t-W+1}^{n},\ldots,\mathbf{x}_t^{n}]
\in\mathbb{R}^{W\times D},
\]
together with the latent state $\mathbf{h}_{t-1}$ and a diffusion-step embedding $\mathbf{e}(n)$. The network outputs a noise estimate $\widehat{\boldsymbol{\epsilon}}_t\in\mathbb{R}^{D}$, which is used in \eqref{eq6} to parameterize the reverse transition. The causal convolutional score network preserves temporal ordering within the local window, while the latent state injects longer-range contextual and cross-product information. In this way, the model separates local temporal denoising from global contextual adaptation. 

We train the score network $s_{\theta}$ and the state-transition encoder $f_{\phi}$ jointly. The target of interest at each time step is the conditional distribution $p_{\theta}(\mathbf{x}_t^{0}\mid \mathbf{h}_{t-1})$, where $\mathbf{h}_{t-1}$ summarizes the information available before predicting $\mathbf{x}_t^{0}$. The corresponding reverse diffusion model is
\[
p_{\theta}(\mathbf{x}_t^{0:N}\mid \mathbf{h}_{t-1})
=
p(\mathbf{x}_t^{N})\prod_{n=1}^{N}
p_{\theta}(\mathbf{x}_t^{n-1}\mid \mathbf{x}_t^{n},\mathbf{h}_{t-1}),
\qquad
p(\mathbf{x}_t^{N})=\mathcal{N}(\mathbf{0},\mathbf{I}).
\]
Marginalizing the intermediate noisy states gives
\[
p_{\theta}(\mathbf{x}_t^{0}\mid \mathbf{h}_{t-1})
=
\int
p_{\theta}(\mathbf{x}_t^{0:N}\mid \mathbf{h}_{t-1})\,d\mathbf{x}_t^{1:N}.
\]

In the pre-launch phase, forecasting starts from $t_0=1$, since no product-specific observations are available. In the early post-launch phase, $t_0$ denotes the first unobserved step after the observed prefix. This shared formulation allows the model to handle both cold-start phases within a single training framework.

A natural estimation principle is conditional maximum likelihood: $ \max_{\theta,\phi}
\sum_{t=t_0}^{T}\log p_{\theta}(\mathbf{x}_t^{0}\mid \mathbf{h}_{t-1})$. However, this objective is intractable because the likelihood requires integration over the latent diffusion states $\mathbf{x}_t^{1:N}$. To obtain a tractable objective, we use the fixed forward process
$
q(\mathbf{x}_t^{1:N}\mid \mathbf{x}_t^{0})
=
\prod_{n=1}^{N} q(\mathbf{x}_t^{n}\mid \mathbf{x}_t^{n-1})$, and train the score network to predict the injected Gaussian noise. Following standard DDPM and TimeGrad practice, we optimize the denoising objective
\begin{equation}\label{eq:joint_train_objective}
\min_{\theta,\phi}
\ \mathcal{L}_{\mathrm{train}}(\theta,\phi)
:=
\mathbb{E}_{t,\mathbf{x}_t^{0},\,n,\,\boldsymbol{\epsilon}}
\Big[
\|s_{\theta}(\mathbf{x}_t^{n},\mathbf{h}_{t-1},n)-\boldsymbol{\epsilon}\|^2
\Big],
\end{equation}
Where $\mathbf{x}_t^{n}
=
\sqrt{\bar{\alpha}_n}\,\mathbf{x}_t^{0}
+
\sqrt{1-\bar{\alpha}_n}\,\boldsymbol{\epsilon}$, $\boldsymbol{\epsilon}\sim\mathcal{N}(\mathbf{0},\mathbf{I})$.

The hidden state $\mathbf{h}_{t-1}$ is computed recursively through the state-transition encoder defined in \eqref{eq:def_f_theta}. Therefore, gradients from the denoising loss backpropagate not only through the score network $s_{\theta}$, but also through the recurrent encoder $f_{\phi}$. This joint training is important because the latent state should be optimized for the downstream denoising task itself, rather than learned independently of it. Algorithm~\ref{alg:training_diffusion} summarizes the training procedure under full trajectory access. During training, we sample a forecast origin $t_0$, construct the corresponding hidden state from the observed prefix up to $t_0-1$, sample a diffusion level $n$, and optimize the noise-prediction loss at that forecast origin.

At inference time, the trained model is applied autoregressively across time. Given $\mathbf{h}_{t-1}$, we generate $\mathbf{x}_t^{0}$ by running the conditional reverse diffusion chain from $\mathbf{x}_t^{N}\sim\mathcal{N}(\mathbf{0},\mathbf{I})$, then update the latent state to $\mathbf{h}_t=f_{\phi}(\mathbf{h}_{t-1},\mathbf{x}_t^{0},\mathbf{c})$, and repeat to obtain $\mathbf{x}_{t+1}^{0},\mathbf{x}_{t+2}^{0},\ldots$. Importantly, both the score network and the state-transition encoder share parameters across all time steps, which promotes temporal consistency and knowledge transfer across different stages of the product life-cycle.

\begin{algorithm}[htb]
\caption{Training Procedure with Full Trajectory Access}\label{alg:training_diffusion}
\begin{algorithmic}
\STATE \textbf{Input:} Full product observation trajectory $\{\mathbf{x}_1^0,\ldots,\mathbf{x}_T^0\}$, aggregated reference embedding $\mathbf{h}_{\mathcal X}$, static embedding $\mathbf{h}_s$
\STATE \textbf{while} the network has not converged \textbf{do}
\STATE \hspace{0.5cm} Sample time step $t_0 \sim \mathcal{U}(1,\dots,T)$
\STATE \hspace{0.5cm} Set $\mathbf{c}=[\mathbf{h}_{\mathcal X};\mathbf{h}_s]$ and initialize $\mathbf{h}_0=\mathbf{W}_0\mathbf{c}+\mathbf{b}_0$
\STATE \hspace{0.5cm} \textbf{if} $t_0>1$ \textbf{then}
\STATE \hspace{1cm} \textbf{for} $t=1,\dots,t_0-1$ \textbf{do}
\STATE \hspace{1.5cm} $\mathbf{h}_t=f_{\phi}(\mathbf{h}_{t-1},\mathbf{x}_t^0,\mathbf{c})$
\STATE \hspace{1cm} \textbf{end for}
\STATE \hspace{0.5cm} \textbf{end if}
\STATE \hspace{0.5cm} Sample diffusion step $n\sim \mathcal{U}(1,\dots,N)$ and Gaussian noise $\boldsymbol{\epsilon}\sim\mathcal{N}(\mathbf{0},\mathbf{I})$
\STATE \hspace{0.5cm} Construct $\mathbf{x}_{t_0}^{n}=\sqrt{\bar{\alpha}_n}\mathbf{x}_{t_0}^0+\sqrt{1-\bar{\alpha}_n}\boldsymbol{\epsilon}$
\STATE \hspace{0.5cm} Take a gradient step on $\left\|
s_{\theta}(\mathbf{x}_{t_0}^{n},\mathbf{h}_{t_0-1},n)-\boldsymbol{\epsilon}
\right\|^2$ with respect to $(\theta,\phi)$
\STATE \textbf{end while}
\end{algorithmic}
\end{algorithm}
\subsection{Theoretical Analysis}
\label{sec:theory_main_bilingual}

In this section, we present error analysis on the proposed generative model.
We first formalize the \textit{best-in-class} model that serves as the benchmark,
and then introduce appropriate error metrics for comparing the trained model against this benchmark.
Recall that over the forecasting horizon $t = t_0,\dots,T$, the model generates predictions recursively as
\begin{equation}
\label{eq:ar_gen_rule}
\widehat{\mathbf{x}}_t \sim p_\theta(\cdot \mid \widehat{\mathbf{h}}_{t-1}),\qquad
\widehat{\mathbf{h}}_t = f_\phi(\widehat{\mathbf{h}}_{t-1}, \widehat{\mathbf{x}}_t, \mathbf{c}),\qquad
\mathbf{c} := [\mathbf{h}_{\mathcal{X}}; \mathbf{h}_{\mathbf{s}}].
\end{equation}
Here $p_\theta$ denotes the trained conditional generative model,
and $f_\phi$ denotes the recurrent transition network. Throughout this subsection, notations with hats (e.g., $\widehat{\mathbf{x}}_t$, $\widehat{\mathbf{h}}_t$) denote quantities generated by the trained diffusion model.

Let $(\theta^\star,\phi^\star)\in\arg\min_{\theta,\phi}\mathcal L_{\mathrm{train}}(\theta,\phi)$ be the model parameter that minimizes the training loss, and let $f^\star:=f_{\phi^\star}$, $s^\star:=s_{\theta^\star}$, $p^\star(\cdot\mid\mathbf h):=p_{\theta^\star}(\cdot\mid\mathbf h)$ denote the corresponding \textit{best-in-class} models. Then the \textit{best-in-class} latent state evolves as $\mathbf h_t^\star = f^\star(\mathbf h_{t-1}^\star, \mathbf x_t, \mathbf c)$. 
We measure distributional discrepancies via the 1-Wasserstein distance $W_1(P,Q):=\inf_{\pi\in\Pi(P,Q)}\mathbb{E}_{(\mathbf X,\mathbf Y)\sim\pi}[\|\mathbf X-\mathbf Y\|]$.
The key error metrics are the expected one-step marginal forecast error,
\begin{equation}
\label{eq:delta_E_def}
\Delta_t
:=\mathbb{E}\Big[
W_1\!\left(
p_\theta(\cdot\mid \widehat{\mathbf h}_{t-1}),
p^\star(\cdot\mid \mathbf h_{t-1}^\star)
\right)
\Big],
\end{equation}
the latent-state error $E_t:=\mathbb{E}\big[\|\widehat{\mathbf h}_t-\mathbf h_t^\star\|\big]$,
and the worst-case marginal error over the horizon, defined as $\overline{\Delta}:=\sup_{t_0\le t\le T}\Delta_t$.
Technical conditions ensuring $\Delta_t$ is well-defined are given in Appendix~\ref{app:reachable_set}.
Before proceeding, some structural assumptions are required.

\begin{assumption}
    \label{assump:combined_main}
    There exist constants $L_P>0$, $\rho\in[0,1)$, and $L_x\ge 0$ such that:
    \begin{enumerate}
    \item[(1)] For all $t$ and any $\mathbf{h},\mathbf{h}'\in\mathcal{H}$,
    $
    W_1\!\left(p_t^\star(\cdot\mid \mathbf{h}),p_t^\star(\cdot\mid \mathbf{h}')\right)\le L_P\|\mathbf{h}-\mathbf{h}'\|;$
    \item[(2)] For any $\mathbf{h},\mathbf{h}'\in\mathcal{H}$ and any $\mathbf{x},\mathbf{x}'$ in the relevant input set,
    $\|f_\phi(\mathbf{h},\mathbf{x},\mathbf{c})-f_\phi(\mathbf{h}',\mathbf{x}',\mathbf{c})\|
    \le \rho\|\mathbf{h}-\mathbf{h}'\| + L_x\|\mathbf{x}-\mathbf{x}'\|.$
    \end{enumerate}
    \end{assumption}
    
Assumption~\ref{assump:combined_main} collects three stability-related constants. The constant $L_P$ bounds the sensitivity of the best-in-class one-step conditional distribution to latent-state perturbations; $\rho$ is the contraction factor of the recurrent update in the latent state; and $L_x$ controls the Lipschitz dependence on the input signal. In our theoretical analysis (Theorem~\ref{thm:multi_step_consistency_main}), we require that $\rho<1$ and that $L_P$ and $L_x$ are bounded by explicit thresholds, which together ensure that multi-step prediction errors do not explode.
These conditions are mild and can be satisfied by design. In practice, the constants $L_P$, $\rho$, and $L_x$ can be controlled via architectural constraints and regularization techniques on the neural network, including spectral normalization, orthogonal RNN, and Lipschitz-bounded activation functions \citep{XuSivaranjani2024ECLipsE,DelattreJacquemont2023GramIteration,zadorozhnyy2024orthogonal}. In our implementation, we follow the enforcement recipe in Appendix~\ref{app:gru_stability}, which applies spectral normalization during training, monitors the empirical contraction factor on rollout trajectories, and re-initializes when it exceeds a conservative stability threshold.
These implementation choices are intended to promote the stability conditions in Assumption \ref{assump:combined_main} and make them more plausible in practice.

We now define the two error terms that appear in the final bound. The \emph{generative error} $\varepsilon_{\mathrm{gen}}$ measures the uniform discrepancy between the learned conditional generator $p_\theta(\cdot\mid\mathbf{h})$ and the best-in-class $p^\star(\cdot\mid\mathbf{h})$ over the reachable set $\mathcal{H}$:
\begin{equation}
\label{eq:eps_gen_def_main}
\varepsilon_{\mathrm{gen}}
\;:=\;
\sup_{\mathbf{h}\in\mathcal{H}}\;
W_1\!\Big(p_\theta(\cdot\mid\mathbf{h}),\;p^\star(\cdot\mid\mathbf{h})\Big),
\end{equation}
and the \emph{transition estimation error} $\varepsilon_f$ captures how closely the learned recurrent transition $f_\phi$ approximates the best-in-class $f^\star$ at the same inputs:
\begin{equation}
\label{eq:eps_f_def_main}
\varepsilon_f
\;:=\;
\sup_{t}\;
\mathbb{E}\Big[
\big\|
f_\phi(\mathbf{h}_{t-1}^\star,\mathbf{x}_t,\mathbf{c})
-
f^\star(\mathbf{h}_{t-1}^\star,\mathbf{x}_t,\mathbf{c})
\big\|
\Big].
\end{equation}
As the model scales and data accumulate, both terms vanish under standard regularity conditions:
$\varepsilon_{\mathrm{gen}}\to 0$ as the score network's capacity grows and the diffusion time discretization is refined \citep{gao2025wasserstein,wang2024wasserstein},
and $\varepsilon_f\to 0$ as the RNN transition model is trained on larger samples \citep{chen2019generalization,jiao2024approximation}. The explicit functional forms of both error bounds are omitted for brevity and can be found in the references above.

\begin{theorem}[Forecasting Error Bound]
    \label{thm:multi_step_consistency_main}
    Define $\kappa := \frac{L_P L_x}{1-\rho}$. 
    If Assumption~\ref{assump:combined_main} holds and $\kappa < 1$, then
    \begin{equation}
    \label{eq:main_uniform_bound}
    \overline{\Delta}
    :=
    \sup_{t_0\le t\le T}\Delta_t
    \;\le\;
    \frac{1}{1-\kappa}
    \left(
    \varepsilon_{\mathrm{gen}}
    +\frac{L_P}{1-\rho}\varepsilon_f
    +L_P E_{t_0-1}
    \right),
    \end{equation}
    where $E_{t_0-1} := \mathbb{E}\|\widehat{\mathbf{h}}_{t_0-1}-\mathbf{h}_{t_0-1}^\star\|$ denotes the initial latent-state error at time $t_0-1$, i.e., at the beginning of the forecasting horizon.
\end{theorem}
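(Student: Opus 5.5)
The plan is to set up a coupled pair of one-step recursions, one for the marginal error $\Delta_t$ and one for the latent-state error $E_t$, unroll the latent recursion using the contraction $\rho<1$, and close the loop with a fixed-point argument on $\overline{\Delta}$ that uses $\kappa<1$.

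\emph{Step 1: bound $\Delta_t$ in terms of $E_{t-1}$.} Conditionally on $(\widehat{\mathbf h}_{t-1},\mathbf h^\star_{t-1})$, I insert the intermediate distribution $p^\star(\cdot\mid\widehat{\mathbf h}_{t-1})$ and apply the triangle inequality for $W_1$. The first piece, $W_1(p_\theta(\cdot\mid\widehat{\mathbf h}_{t-1}),p^\star(\cdot\mid\widehat{\mathbf h}_{t-1}))$, is at most $\varepsilon_{\mathrm{gen}}$ by \eqref{eq:eps_gen_def_main}. This uses that $\widehat{\mathbf h}_{t-1}\in\mathcal H$, which is what Appendix~\ref{app:reachable_set} guarantees. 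The second piece is at most $L_P\|\widehat{\mathbf h}_{t-1}-\mathbf h^\star_{t-1}\|$ by Assumption~\ref{assump:combined_main}(1). Taking expectations gives
\[
\Delta_t\le \varepsilon_{\mathrm{gen}}+L_P E_{t-1}.
\]

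\emph{Step 2: bound $E_t$ in terms of $E_{t-1}$ and $\Delta_t$.} I write $\widehat{\mathbf h}_t-\mathbf h^\star_t$ as the sum of
\[
f_\phi(\widehat{\mathbf h}_{t-1},\widehat{\mathbf x}_t,\mathbf c)-f_\phi(\mathbf h^\star_{t-1},\mathbf x_t,\mathbf c)
\quad\text{and}\quad
f_\phi(\mathbf h^\star_{t-1},\mathbf x_t,\mathbf c)-f^\star(\mathbf h^\star_{t-1},\mathbf x_t,\mathbf c).
\]
The second difference has expected norm at most $\varepsilon_f$ by \eqref{eq:eps_f_def_main}. The first is bounded via Assumption~\ref{assump:combined_main}(2) by $\rho\|\widehat{\mathbf h}_{t-1}-\mathbf h^\star_{t-1}\|+L_x\|\widehat{\mathbf x}_t-\mathbf x_t\|$.

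To turn $\mathbb E\|\widehat{\mathbf x}_t-\mathbf x_t\|$ into $\Delta_t$, I construct the two processes on a common probability space. At each step, conditionally on the past, $\widehat{\mathbf x}_t\sim p_\theta(\cdot\mid\widehat{\mathbf h}_{t-1})$ and $\mathbf x_t\sim p^\star(\cdot\mid\mathbf h^\star_{t-1})$ are drawn from an optimal $W_1$ coupling. With this synchronous coupling, $\mathbb E\|\widehat{\mathbf x}_t-\mathbf x_t\|=\Delta_t$. Taking expectations then yields
\[
E_t\le \rho E_{t-1}+L_x\Delta_t+\varepsilon_f.
\]

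\emph{Step 3: unroll and close.} Iterating Step 2 from $t_0-1$ and bounding each $\Delta_s$ by $\overline{\Delta}$, the geometric series gives, for every $t\ge t_0$,
\[
E_{t-1}\le \rho^{\,t-t_0}E_{t_0-1}+\frac{L_x\overline{\Delta}+\varepsilon_f}{1-\rho}.
\]
Substituting into Step 1 and using $\rho^{\,t-t_0}\le1$:
\[
\Delta_t\le \varepsilon_{\mathrm{gen}}+L_P E_{t_0-1}+\frac{L_P}{1-\rho}\varepsilon_f+\kappa\,\overline{\Delta}.
\]
Taking the supremum over $t_0\le t\le T$ gives the same bound for $\overline{\Delta}$ itself. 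The horizon is finite, and each $\Delta_t$ is finite provided the generators have finite first moments on $\mathcal H$, so $\overline{\Delta}<\infty$. Since $\kappa<1$, I can move $\kappa\overline{\Delta}$ to the left-hand side and divide by $1-\kappa$, which yields \eqref{eq:main_uniform_bound}.

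\emph{Main obstacle.} I expect the delicate step to be the coupling in Step 2. The target process has to be read as $\mathbf x_t$ drawn from $p^\star(\cdot\mid\mathbf h^\star_{t-1})$, so that $\Delta_t$ is an honest bound on $\mathbb E\|\widehat{\mathbf x}_t-\mathbf x_t\|$. This requires a measurable selection of optimal couplings, conditionally on the filtration generated by both chains, so that the step-wise couplings glue into one joint process. I would first invoke the standard measurable-selection result for optimal transport plans. If that is awkward to cite, a fallback is to use $\eta$-optimal couplings, carry an extra $\eta$ through Steps 2 and 3, and let $\eta\to0$ at the end. A secondary point is the definition of $\varepsilon_f$: it is evaluated along the best-in-class path $(\mathbf h^\star_{t-1},\mathbf x_t)$, which is exactly where Step 2 applies it, so no further argument is needed there.
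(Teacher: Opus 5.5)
Your proposal is correct and follows the paper's proof essentially step for step. It uses the same $W_1$ triangle-inequality bound $\Delta_t\le\varepsilon_{\mathrm{gen}}+L_P E_{t-1}$, the same latent recursion via Assumption~\ref{assump:combined_main}(2) and $\varepsilon_f$, the same (near-)optimal coupling with measurable selection to replace $\mathbb{E}\|\widehat{\mathbf{x}}_t-\mathbf{x}_t\|$ by $\Delta_t\le\overline{\Delta}$, and the same geometric unrolling closed by $\kappa<1$. Your explicit synchronous construction of both chains on one probability space, and your note that $\overline{\Delta}<\infty$ (from the moment condition) is needed before rearranging, are slightly more careful than the paper's treatment of those two points.
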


The stability margin $\kappa = \frac{L_P L_x}{1-\rho}$ governs whether multi-step forecasting error remains controlled. When $\kappa \ge 1$, the right-hand side of \eqref{eq:main_uniform_bound} becomes unbounded, and no finite guarantee can be obtained; thus $\kappa<1$ is a necessary condition for any meaningful guarantee. When $\kappa<1$, the bound ensures that the horizon-uniform error $\overline{\Delta}$ remains finite, indicating that the autoregressive feedback loop is contractive rather than amplifying.
The three constants ($L_P,L_x,\rho$) in $\kappa$ admit a direct interpretation: $L_P$ quantifies how sensitive the target conditional law is to latent-state perturbations; $L_x$ captures how input forecast mismatch propagates into latent drift through the recurrent transition; and $1-\rho$ measures the contraction strength of the latent dynamics, i.e., how aggressively the GRU discounts past latent mismatch. In practice, $L_P$ is influenced by architectural choices (for example, a smaller latent dimension may reduce sensitivity), while $\rho$ and $L_x$ are jointly controlled through spectral regularization of the GRU weight matrices.
Specifically, Appendix~\ref{app:gru_stability} makes the condition $\kappa<1$ checkable for the GRU update by bounding $(\rho,L_x)$ via Jacobian spectral-norm conditions that depend only on weight-matrix norms and observable gate ranges. The resulting sufficient condition is fully explicit and can be enforced during training via spectral normalization or spectral clipping on the recurrent weight matrices. Our analysis therefore provides a principled guideline for designing stable neural network architectures. Appendix~\ref{app:sensitivity_kappa} further provides a numerical sensitivity analysis, suggesting a practical safe range $\kappa\in[0.5,0.8]$ that balances stability and expressive capacity.

   %Theorem~\ref{thm:multi_step_consistency_main} further shows that multi-step forecasting error vanishes as the model improves, i.e., $\overline{\Delta}\to 0$, provided that (i) the generative approximation error $\varepsilon_{\mathrm{gen}}$ and the transition estimation error $\varepsilon_f$ both vanish, and (ii) the initial latent-state mismatch at the start of multi-step forecasting, $E_{t_0-1}$, also vanishes. Here $E_{t_0-1}$ measures the discrepancy between the learned and best-in-class latent states at time $t_0-1$. This condition is necessary because multi-step forecasting inherits any mismatch at its starting point. 

   Theorem~\ref{thm:multi_step_consistency_main} further shows that multi-step forecasting error is ultimately governed by the initial latent-state mismatch at the start of forecasting. In particular, $\overline{\Delta}$ converges to a level determined by $E_{t_0-1}$ as the model improves. When the generative error $\varepsilon_{\mathrm{gen}}$ and the transition error $\varepsilon_f$ vanish, the remaining forecasting error is driven entirely by $E_{t_0-1}$. Here $E_{t_0-1}$ measures the discrepancy between the learned and best-in-class latent states at time $t_0-1$, and reflects the fact that multi-step forecasting inherits any mismatch at its starting point.

\section{Empirical Studies}\label{sec5}

We evaluate CDLF in two complementary cold-start settings under a shared launch-stage protocol. The two studies serve different but complementary purposes. Study~1 considers Intel microprocessor SKU life-cycles, a classical benchmark setting with rich descriptors, long trajectories, and strong prior baselines from the literature. It therefore provides a disciplined setting for assessing whether CDLF improves both point and probabilistic forecasting performance relative to established alternatives. Study~2 is closer to the central motivation of this paper. It examines the platform-mediated adoption of open LLM repositories on the Hugging Face Hub, where early adoption is highly heterogeneous, platform exposure is episodic, and bursts in attention can quickly reshape the trajectory. This second setting is especially relevant in the current AI environment, in which many models and related resources are released continuously, yet their adoption paths differ sharply in scale, timing, and persistence. Across both studies, we compare CDLF against standard forecasting baselines, evaluate both point and distributional accuracy, and conduct ablations to isolate the roles of $\mathbf{s}$ and $\mathcal{X}$.

\subsection{Study 1: Intel Microprocessor Life-Cycle Forecasting}

We first evaluate CDLF on Intel microprocessor SKU life-cycles. Our goal is to examine whether conditioning on static descriptors $\mathbf{s}$ and reference trajectories $\mathcal{X}$ improves early-stage forecasting accuracy and probabilistic calibration in a canonical product life-cycle setting.

\subsubsection{Setup: Data, Protocol, Benchmarks, and Metrics}

We use the Intel microprocessor sales dataset of \cite{Manary2022}, which is also adopted by \cite{Guo2025}. The dataset contains weekly sales records for 86 Intel CPU SKUs, with up to 187 weeks per SKU. The median (mean) life-cycle length is 109 (86) weeks, and the median (mean) weekly orders are 22{,}850 (43{,}370). Each SKU is associated with static product descriptors, including generation, core count, clock speed, and average selling price. These descriptors provide contextual information when the target SKU has little or no product-specific history.

To improve comparability across heterogeneous scales, we normalize each SKU's weekly sales by its own maximum observed weekly sales. We further align product launches to a common week zero so that all trajectories are represented on a standardized life-cycle timeline. Cold-start evaluation controls how much post-launch history is revealed at test time, distinguishing between two phases: (i) \emph{pre-launch} forecasting, in which no target-SKU observations are revealed, and (ii) \emph{early post-launch} forecasting, in which only a short observed prefix is available. Within CDLF, each forecasting instance is represented by three components: (i) the static descriptor vector $\mathbf{s}$ for the target SKU; (ii) a reference set of historical life-cycle trajectories $\mathcal{X}=\{(\mathbf{x}_{1:T}^{(1)},\mathbf{s}^{(1)}),\dots,(\mathbf{x}_{1:T}^{(K)},\mathbf{s}^{(K)})\}$
selected from previously launched products; and (iii) an observed prefix $\mathbf{x}_{1:t_0-1}$ of the target trajectory, which is revealed only in the early post-launch setting and provides progressively richer product-specific evidence as $t_0$ increases.

We compare CDLF against representative baselines spanning classical diffusion modeling, analog-ensemble forecasting, Bayesian updating, and nonparametric machine learning: (i) Bass diffusion \citep{Bass1969}; (ii) the TiGo-ETS analog ensemble of \cite{Guo2025}; (iii) the Bayesian nonparametric (Bayesian NP) model of \cite{Dew2018}; and (iv) Quantile Regression Forests (QRF; \citealt{meinshausen2006a}). All methods use identical train/test splits and the same rolling forecasting procedure. We report short-horizon performance (1--8 weeks ahead) and medium-horizon performance (9--16 weeks ahead). For TiGo-ETS, we adopt the published smoothing parameters and clustering settings in \cite{Guo2025}. QRF is trained with 1{,}000 trees and optimized using pinball loss. For Bayesian NP, because customer-level data are unavailable, we assume that customers begin considering the product at launch and adopt at most once; market potential is approximated by the median demand among comparable historical life-cycles.

Following \cite{Guo2025}, we evaluate both point accuracy and probabilistic forecast quality. Point forecasts are assessed by mean absolute error (MAE), computed between the realized weekly sales $x_t^0$ and a central summary of the predictive distribution, such as the median. For probabilistic forecasts, we use pinball loss \citep{Koenker1978} for quantile predictions. Given realized $x_t^0$ and predicted $u$-quantile $\hat{x}_{t,u}$,
\[
L_u(\hat{x}_{t,u}, x_t^0)=
\begin{cases}
u (x_t^0-\hat{x}_{t,u}), & x_t^0 \ge \hat{x}_{t,u},\\
(1-u)(\hat{x}_{t,u}-x_t^0), & x_t^0 < \hat{x}_{t,u}.
\end{cases}
\]
For CDLF, $\hat{x}_{t,u}$ is obtained empirically from samples drawn from the conditional predictive distribution:
\[
x_{1,t},\dots,x_{M,t}\sim p_\theta(x_t \mid \mathbf{h}_{t-1}),
\qquad
\hat{x}_{t,u}=\mathrm{Quantile}_u(\{x_{m,t}\}_{m=1}^{M}).
\]
We summarize distributional accuracy using the continuous ranked probability score (CRPS; \citealt{Matheson1976}), approximated by the mean pinball loss over a dense grid of quantiles \citep{Gneiting2011}:
\[
\mathrm{CRPS}\big(p_\theta(x_t \mid \mathbf{h}_{t-1}), x_t^0\big)
\approx
\frac{1}{99}\sum_{j=1}^{99} L_{u_j}(\hat{x}_{t,u_j}, x_t^0),
\qquad
u_j=\frac{j}{100}.
\]
Finally, MCRPS denotes the mean CRPS averaged over forecast times and SKUs.

\subsubsection{Results and Discussion}

Table~\ref{tab:study1} reports rolling out-of-sample results for short (1--8 weeks) and medium (9--16 weeks) horizons using MAE and MCRPS ($\times 10^3$). CDLF achieves the best performance on all four metrics. Relative to the strongest baseline, TiGo-ETS, CDLF reduces MAE from 17.16 to 14.52 for 1--8 weeks ahead and from 23.61 to 19.96 for 9--16 weeks ahead. It also improves probabilistic accuracy, reducing MCRPS from 13.67 to 10.16 for 1--8 weeks ahead and from 19.03 to 15.12 for 9--16 weeks ahead. These gains indicate that CDLF improves both central forecast accuracy and uncertainty quantification. Among the benchmarks, TiGo-ETS is the strongest classical baseline and QRF is the most competitive nonparametric machine-learning baseline. However, neither matches CDLF in either point accuracy or distributional quality, and the advantage of CDLF persists as the forecast horizon extends. This pattern is consistent with the intended role of CDLF in cold-start settings: static descriptors and analog life-cycle trajectories provide a stronger prior in the absence of long within-SKU histories, while the conditional generative formulation helps preserve predictive flexibility as uncertainty accumulates over time.

\begin{table}[htp]
\centering
\small
\caption{\label{tab:study1}Study~1 (Intel microprocessors): performance comparison across models under the rolling forecast setup for 1--8 and 9--16 weeks ahead. Metrics include MAE($\times10^3$) and MCRPS($\times10^3$).}
\begin{tabular}{cccccc}
\hline
\multirow{2}*{Forecasting Approach} & \multirow{2}*{Model} & \multicolumn{2}{c}{1--8 weeks ahead} & \multicolumn{2}{c}{9--16 weeks ahead}\\
\cline{3-6}
& & MAE & MCRPS & MAE & MCRPS\\
\hline
\multirow{2}*{Bayesian Ensembles of Single Model Type}
&Bass & 30.82 & 27.05 & 35.12 & 31.96\\
&TiGo-ETS & 17.16 & 13.67 & 23.61 & 19.03\\
\hline
Bayesian Updating Approaches
&Bayesian NP & 26.78 & 21.55 & 33.81 & 29.42\\
\hline
\multirow{2}*{Machine Learning Approaches}
&QRF & 21.02 & 17.51 & 26.98 & 23.84\\
&CDLF & \textbf{14.52} & \textbf{10.16} & \textbf{19.96} & \textbf{15.12}\\
\hline
\end{tabular}
\end{table}

To further examine distributional performance across the demand range, Figure~\ref{fig:study1} visualizes the mean pinball loss across 99 quantiles. CDLF dominates over most quantiles, with TiGo-ETS and QRF as the closest competitors. Accordingly, the subsequent visual comparison focuses on these three methods for clarity. The quantile-wise results reinforce the main message of Table~\ref{tab:study1}: the gain of CDLF is not confined to a single point summary, but extends across much of the predictive distribution.

\begin{figure}[htbp]
\centering
\subfigure[1--8 Weeks Ahead]{
\includegraphics[width=2.5in]{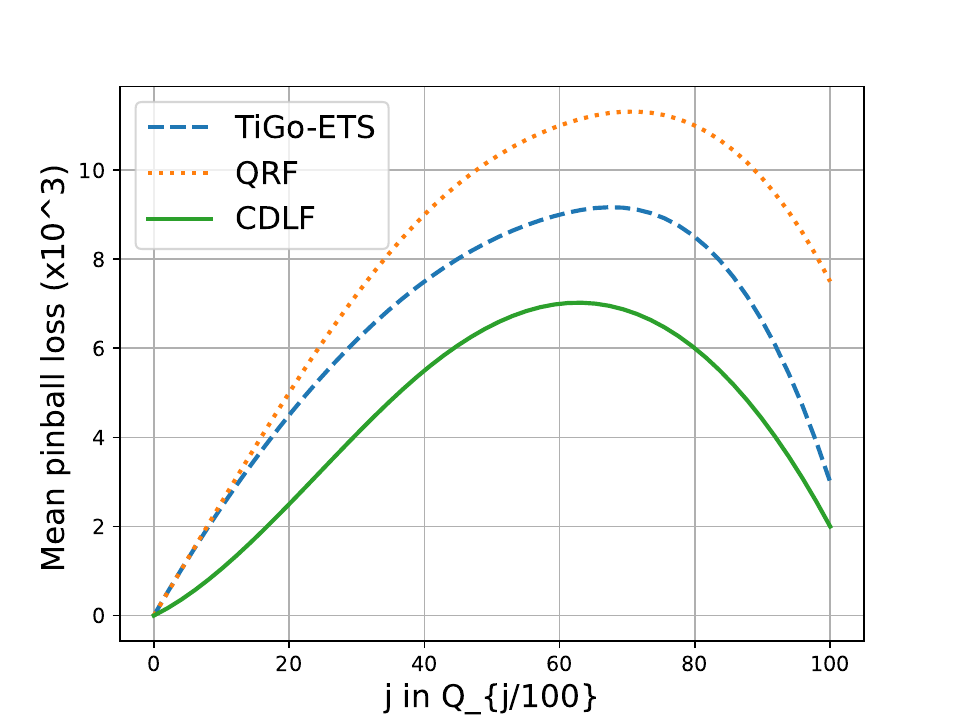}
}
\subfigure[9--16 Weeks Ahead]{
\includegraphics[width=2.5in]{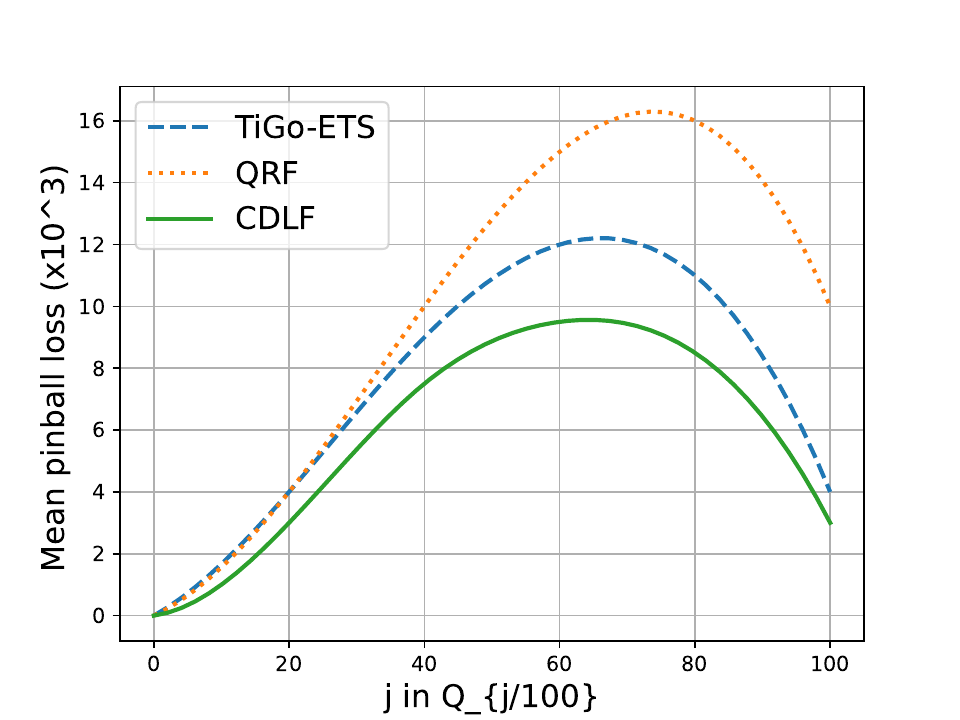}
}
\caption{Study~1: quantile-wise calibration measured by mean pinball loss across 99 quantiles.}
\label{fig:study1}
\end{figure}

% =========================================================

\subsection{Study 2: Cold-Start Adoption Forecasting for LLM Repositories on the Hugging Face Hub}
\label{sec:study2_hf_llm}

In Study~2, we treat  the adoption dynamics of open LLM repositories on the Hugging Face Hub as platform-mediated digital products and use public engagement signals as demand proxies. The goal is to evaluate whether CDLF can forecast multi-step adoption trajectories under both \emph{pre-launch} and \emph{early post-launch} information phases in a setting characterized by strong heterogeneity and bursty exposure dynamics.

\subsubsection{Data, CDLF Mapping, and Evaluation Protocol}

We construct a daily adoption panel from the public dataset \texttt{severo/trending-repos}, which records the top trending repositories on the Hugging Face Hub. For the \texttt{models} configuration, each record includes \texttt{date}, \texttt{id}, \texttt{rank}, \texttt{recent\_likes}, \texttt{likes} (cumulative), and \texttt{month\_downloads}. To focus on LLMs, we restrict attention to repositories with \texttt{pipeline\_tag = text-generation} and curate a set of 12 repositories, yielding 863 repo-day observations from 2023-07-29 to 2025-06-01. Since platform visibility is episodic rather than continuous, we segment each repository series into contiguous trending episodes. Across the 12 repositories, we identify 44 episodes and retain the 24 episodes with length at least 10 days in order to standardize the prefix and forecast horizon.

Within each retained episode, we align the episode start to a common origin and define the daily adoption increment as
\[
\Delta_t := \text{likes}_t - \text{likes}_{t-1},
\qquad
x_t := \log(1 + \Delta_t),
\]
where $\text{likes}_t$ is the cumulative like count on day $t$. For the first day of an episode, when $\text{likes}_{t-1}$ is unavailable, we set $\Delta_1=0$ and hence $x_1=0$. The log transform stabilizes heavy tails while preserving comparability across repositories. The resulting target trajectory is $\mathbf{x}_{1:T}$. We use a compact set of static descriptors derived deterministically from public metadata:
\[
\mathbf{s}:=\big(\mathrm{Org},\ \mathrm{ScaleBucket},\ \mathrm{Gated}\big),
\]
where $\mathrm{Org}$ is the repository namespace, $\mathrm{Gated}\in\{0,1\}$ indicates whether access requires acceptance of usage terms, and $\mathrm{ScaleBucket}$ is parsed from the repository identifier when available (e.g., 7B, 8B, 70B, or mixture-of-experts patterns such as 8x7B and 8x22B). These descriptors are intentionally compact. They are not meant to capture all determinants of adoption, but rather to provide a minimal and comparable representation that can support cold-start transfer across repositories.

Following the CDLF mapping, we construct the analogical reference set $\mathcal{X}$ from other repositories in Table~\ref{tab:hf_llm_products} using similarity over $\mathbf{s}$, implemented through top-$K$ nearest neighbors in the descriptor space. In the early post-launch setting, the model additionally conditions on an observed prefix $\mathbf{x}_{1:t_0-1}$. If desired, public platform signals such as daily rank, recent likes, and monthly downloads can also be appended to $\mathbf{x}_t$ as additional channels, although our main specification focuses on the core adoption trajectory.

For evaluation, we adopt the same launch-stage protocol as in Study~1. In the \emph{pre-launch} setting, we forecast the full horizon $\mathbf{x}_{1:T}$ using only $\mathbf{s}$ and $\mathcal{X}$. In the \emph{early post-launch} setting, we reveal a short observed prefix $\mathbf{x}_{1:t_0-1}$ and forecast the remaining horizon. We set $T=10$ and $t_0=6$ for all instances, corresponding to a 5-day observed prefix and a 5-day forecast horizon. Performance is evaluated using MAE and MCRPS, with CRPS approximated by the mean pinball loss over 99 quantiles. Benchmarks include Bass, TiGo-ETS, Bayesian NP, and QRF, using the same information for each method.

\begin{table}[htp]
\centering
\small
\caption{Study~2 (Hugging Face LLM adoption): selected LLM repositories in the public trending panel.}
\label{tab:hf_llm_products}
\begin{tabular}{p{8.0cm}p{2.2cm}p{1.7cm}p{1.3cm}}
\hline
\textbf{Repo id (Hugging Face)} & \textbf{Org} & \textbf{Scale} & \textbf{Gated} \\
\hline
\texttt{Qwen/QwQ-32B-Preview} & Qwen & 32B & No \\
\texttt{deepseek-ai/DeepSeek-R1} & deepseek-ai & -- & No \\
\texttt{google/gemma-7b-it} & google & 7B & Yes \\
\texttt{meta-llama/Llama-2-70b-chat-hf} & meta-llama & 70B & Yes \\
\texttt{meta-llama/Llama-2-7b-chat-hf} & meta-llama & 7B & Yes \\
\texttt{meta-llama/Meta-Llama-3.1-70B-Instruct} & meta-llama & 70B & Yes \\
\texttt{meta-llama/Meta-Llama-3.1-8B-Instruct} & meta-llama & 8B & Yes \\
\texttt{microsoft/Phi-3-mini-4k-instruct} & microsoft & -- & No \\
\texttt{microsoft/phi-2} & microsoft & -- & No \\
\texttt{mistralai/Mistral-7B-Instruct-v0.2} & mistralai & 7B & No \\
\texttt{mistralai/Mixtral-8x22B-Instruct-v0.1} & mistralai & 8x22B & No \\
\texttt{mistralai/Mixtral-8x7B-Instruct-v0.1} & mistralai & 8x7B & No \\
\hline
\end{tabular}
\end{table}

\subsubsection{Results and Discussion}
\label{sec:study2_results_discussion}

This subsection reports the empirical results for cold-start LLM adoption forecasting on the Hugging Face trending panel under the protocol described above. All methods are evaluated on identical rolling windows with fixed horizon $T=10$ and cold-start point $t_0=6$, corresponding to a 5-day observed prefix and a 5-day forecast horizon. The evaluation set contains 12 repositories and 24 retained trending episodes. For diffusion-based models, we generate $M=100$ trajectories per test window for distributional evaluation. We organize the findings along five dimensions: overall forecasting performance, ablation on conditioning sources, feature-conditioned pre-launch generation, launch segmentation based on predictive distributions, and qualitative triangulation with public release rationales.

\textbf{Performance comparison and burst characterization.}
Table~\ref{tab:hf_model_comparison_event} reports point accuracy, distributional accuracy, and event-level metrics. CDLF outperforms all competing methods on MAE, RMSE, MCRPS, DTW, peak error, and AUC error. MAE, RMSE, and MCRPS are computed on the log-transformed scale / original scale, whereas peak and AUC errors are computed after inverse transformation to raw likes. Relative to the strongest baseline, TiGo-ETS, CDLF reduces MAE by 15.6\% ($0.45 \rightarrow 0.38$), improves MCRPS from 0.38 to 0.33, and lowers peak-intensity error by 21.4\%. These gains are substantively important because LLM adoption episodes are short, heterogeneous, and often burst-driven. A method that performs well only on average levels but fails to capture burst timing and magnitude is of limited value. The results therefore suggest that conditioning on both analog trajectories and product-specific prefixes is especially useful when early adoption is sparse but can change rapidly as exposure increases.

\begin{table}[htp]
\centering
\small
\caption{Study~2 (LLM adoption): performance comparison and event-level errors on the forecast horizon $[t_0,T]$ with $T=10$ and $t_0=6$ (5-day prefix, 5-day forecast). Metrics are averaged over rolling windows.}
\label{tab:hf_model_comparison_event}
\begin{tabular}{lcccccc}
\hline
Model & MAE $\downarrow$ & RMSE $\downarrow$ & MCRPS $\downarrow$ & DTW $\downarrow$ & Peak error $\downarrow$ & AUC error $\downarrow$ \\
\hline
\textbf{CDLF (ours)} & \textbf{0.38} & \textbf{0.62} & \textbf{0.33} & \textbf{1.35} & \textbf{55} & \textbf{180} \\
TiGo-ETS & 0.45 & 0.70 & 0.38 & 1.55 & 70 & 220 \\
QRF & 0.49 & 0.76 & 0.41 & 1.70 & 78 & 240 \\
Bayesian NP & 0.52 & 0.80 & 0.44 & 1.82 & 85 & 260 \\
Bass diffusion & 0.58 & 0.88 & 0.47 & 2.05 & 95 & 300 \\
\hline
\end{tabular}
\end{table}

\textbf{Case-based trajectory fidelity and uncertainty.}
To complement the aggregate metrics, Figure~\ref{fig:hf_case_combined} presents a representative episode-level comparison between the realized adoption increments $\Delta_t$ and the CDLF forecasts under two cold-start settings: pre-launch (no observed prefix) and early post-launch (a short observed prefix), together with P10--P90 predictive bands. Consistent with the improvements in MCRPS and DTW reported in Table~\ref{tab:hf_model_comparison_event}, the early post-launch forecast yields a tighter predictive band and tracks the turning points of the observed trajectory more closely. By contrast, the pre-launch forecast is more dispersed, as expected under greater information scarcity, but still provides a plausible baseline trajectory.

\begin{figure}[htbp]
  \centering
  \includegraphics[width=0.65\linewidth]{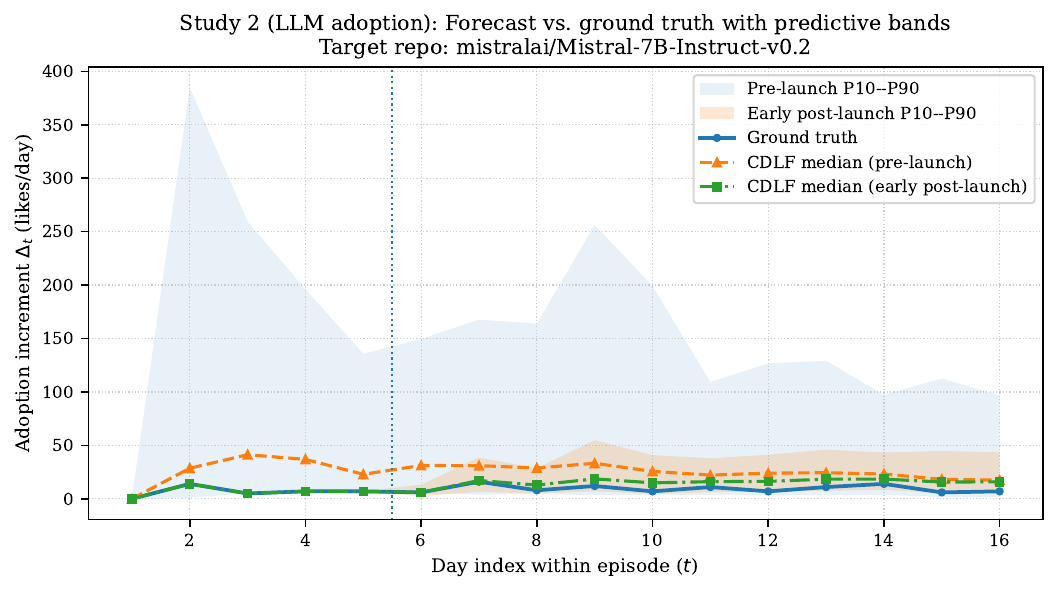}
  \caption{Study~2 (LLM adoption): forecast versus ground truth in a representative episode. The solid line shows realized adoption increments $\Delta_t$ (likes/day). CDLF is shown under two cold-start settings: pre-launch ($t_0=1$) and early post-launch ($t_0=6$), with medians distinguished by marker and line style and P10--P90 predictive bands. The vertical dotted line marks $t_0$.}
  \label{fig:hf_case_combined}
\end{figure}
\textbf{Ablation on conditioning sources.}
To isolate the contribution of each conditioning source, we conduct one-at-a-time ablations under the same episode-aligned evaluation protocol. The full CDLF conditions on the early adoption signal $\mathbf{x}_{1:t_0-1}$, static descriptors $\mathbf{s}$, and the analogical reference set $\mathcal{X}$. We remove either $\mathcal{X}$ or $\mathbf{s}$ while keeping model capacity and the training protocol unchanged. Table~\ref{tab:hf_ablation_all} shows that removing $\mathcal{X}$ yields the largest degradation across all metrics, with MAE increasing by 13.2\%, RMSE by 11.3\%, MCRPS by 12.1\%, and DTW by 14.8\%. This indicates that transferable priors from analog adoption trajectories are especially valuable when the target history is short. Removing $\mathbf{s}$ leads to a smaller but still systematic deterioration, suggesting that compact metadata plays a complementary role by improving analog selection and calibration beyond the early signal alone. The ablation results also admit a substantive interpretation. In this setting, early observations are often too sparse to reveal whether a repository will remain niche or break out, so the analogical reference set provides the main transferable life-cycle signal, while static descriptors play a smaller but still useful role by improving how that transfer is targeted across coarse product characteristics such as organizational visibility and access regime.
\begin{table}[htp]
\centering
\small
\caption{Study~2 (LLM adoption): ablation on conditioning sources. All metrics are computed on the forecast horizon $[t_0,T]$
with $T=10$ and $t_0=6$ and averaged over rolling windows.}
\label{tab:hf_ablation_all}
\begin{tabular}{lrrrr}
\hline
Variant  & MAE $\downarrow$ & RMSE $\downarrow$ & MCRPS $\downarrow$ & DTW $\downarrow$ \\
\hline
\textbf{CDLF (full)} & \textbf{0.38} & \textbf{0.62} & \textbf{0.33} & \textbf{1.35} \\
w/o $\mathcal{X}$    & 0.43 (\textbf{+13.2\%}) & 0.69 (\textbf{+11.3\%}) & 0.37 (\textbf{+12.1\%}) & 1.55 (\textbf{+14.8\%}) \\
w/o $\mathbf{s}$     & 0.40 (\textbf{+5.3\%})  & 0.65 (\textbf{+4.8\%})  & 0.35 (\textbf{+6.1\%})  & 1.42 (\textbf{+5.2\%}) \\
\hline
\end{tabular}

\vspace{2mm}
\footnotesize
\textit{Notes.} ``w/o'' removes the specified conditioning source while keeping other inputs, model capacity, and the training/tuning protocol unchanged.
The early adoption signal $\mathbf{x}_{1:t_0-1}$ is always retained; ablations only remove $\mathcal{X}$ (cross-repository reference set) or
$\mathbf{s}$ (static attributes). 
\end{table}

\textbf{Feature-conditioned pre-launch generation.}
Figure~\ref{fig:hf_feature_4profiles} illustrates what CDLF learns in the pre-launch phase. Conditioning only on the compact static profile $\mathbf{s}=(\mathrm{Org}, \mathrm{ScaleBucket}, \mathrm{Gated})$ and analog reference trajectories, CDLF generates a predictive distribution for daily adoption increments. This directly addresses a central launch-stage question: before any repository-specific adoption signal is observed, how should expected adoption intensity and uncertainty vary with a repository's positioning features?

Two broad patterns emerge. First, $\mathrm{Org}$ acts as a proxy for brand- and ecosystem-driven exposure. Repositories associated with organizations that have broader platform visibility tend to exhibit higher median adoption paths even before any early signal is observed. Second, access friction matters. Moving from open to gated access lowers the median path and widens the predictive band, indicating both weaker typical adoption and greater uncertainty. This pattern is consistent with the idea that adoption on digital platforms depends not only on intrinsic interest, but also on how easily users can access and experiment with a release. The effect of friction is especially pronounced for repositories with weaker exposure: when visibility is limited, reducing access and deployment friction appears to matter both for raising the expected path and for reducing uncertainty.

\begin{figure}[htbp]
\centering
\includegraphics[width=0.6\linewidth]{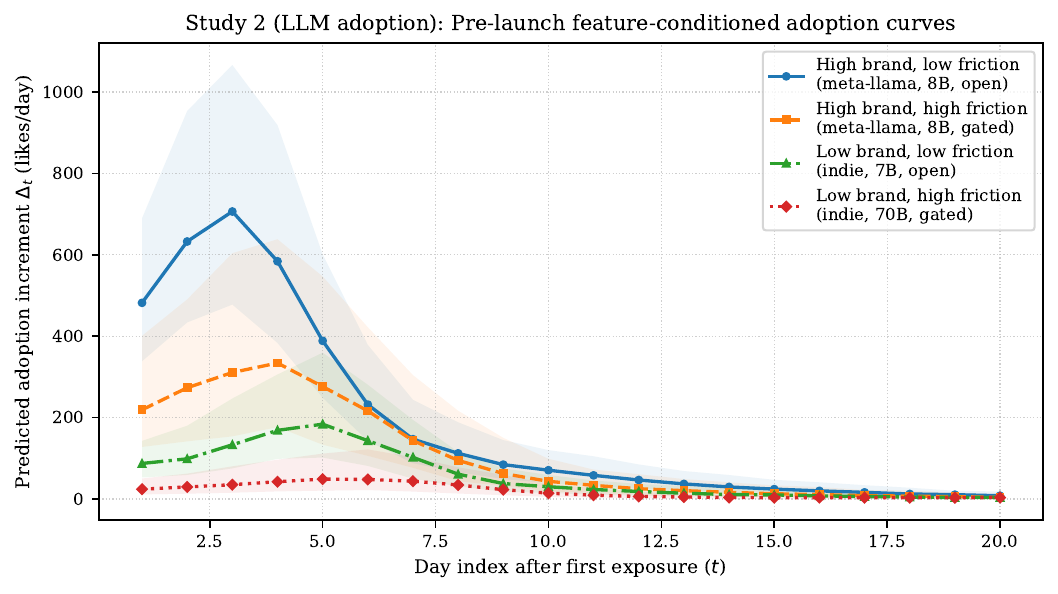}
\caption{Study~2 (LLM adoption): pre-launch adoption curves conditioned on $\mathbf{s} = (\mathrm{Org}, \mathrm{ScaleBucket}, \mathrm{Gated})$. Curves show medians with P10--P90 bands, contrasting high vs. low exposure and friction.}
\label{fig:hf_feature_4profiles}
\end{figure}

These feature-conditioned forecasts are especially meaningful in the current AI environment. Many models are released in rapid succession, but they differ not only in technical scale, but also in organizational visibility, access restrictions, and deployability. A pre-launch model that can map these differences into a predictive distribution is useful because it helps characterize which releases are likely to remain modest, which may break out, and where uncertainty is greatest even before any repository-specific prefix becomes available.

\textbf{From predictive distributions to launch segmentation.}
Beyond point forecasting, the generative output of CDLF yields interpretable summaries for launch-stage reasoning. For each repository, we summarize the predictive distribution over a fixed horizon using three statistics: (i) typical adoption load, measured by median cumulative adoption (P50 AUC); (ii) breakout potential, measured by upper-quantile peak intensity (P90 Peak); and (iii) uncertainty, measured by the inter-quantile bandwidth of cumulative adoption (P90--P10 AUC). Figure~\ref{fig:dash_dashboard_and_segmentation_A1C4} visualizes these summaries. Panel (a) plots typical adoption load against breakout potential, with marker size indicating uncertainty. Panel (b) maps repositories into a 2$\times$2 potential--risk plane based on median splits of P50 AUC and AUC bandwidth. This segmentation translates trajectory distributions into interpretable launch profiles. High-potential, high-risk releases require the greatest attention because they combine upside with substantial uncertainty. High-potential, low-risk releases appear more scalable and support broader distribution. Low-potential, high-risk releases call for de-risking, often through reduced friction or more targeted release design. Low-potential, low-risk releases can often be managed with a lighter rollout strategy. Study~2 thus shows not only that CDLF improves statistical forecasting performance, but also that its predictive distributions can be organized into structured launch profiles that are more informative than a single average trajectory.

\begin{figure}[htbp]
\centering
\subfigure[Dashboard: P50 AUC vs. P90 Peak]{
\includegraphics[height=1.7in]{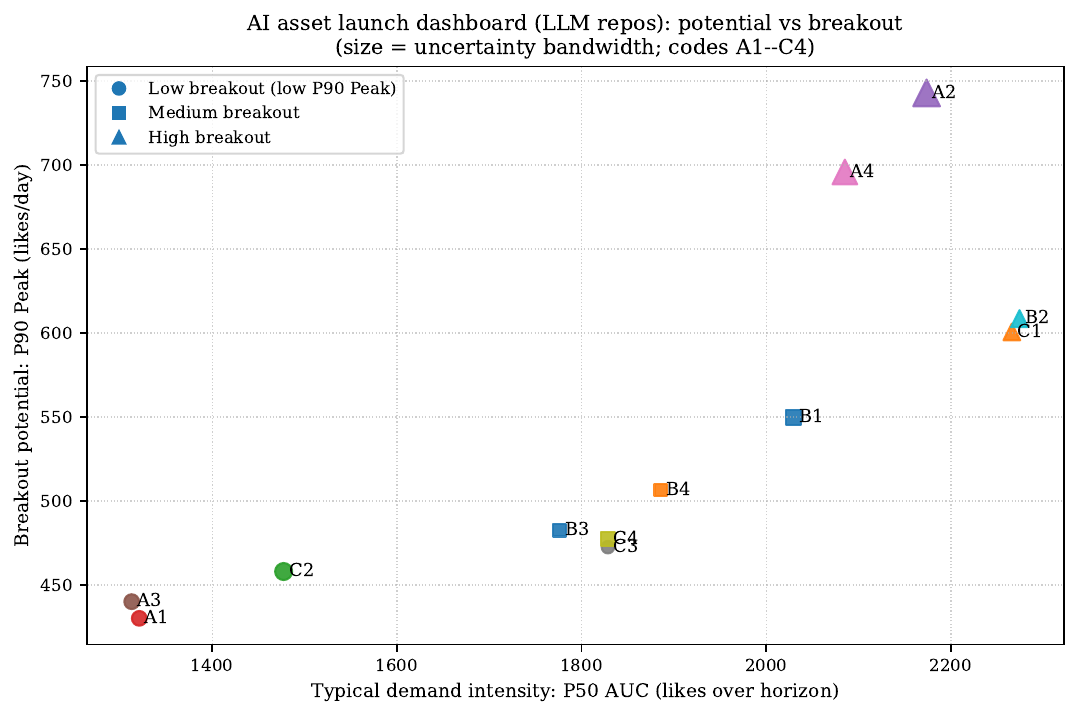}
}
\subfigure[Segmentation: Potential vs. Risk (Median Splits)]{
\includegraphics[height=1.7in]{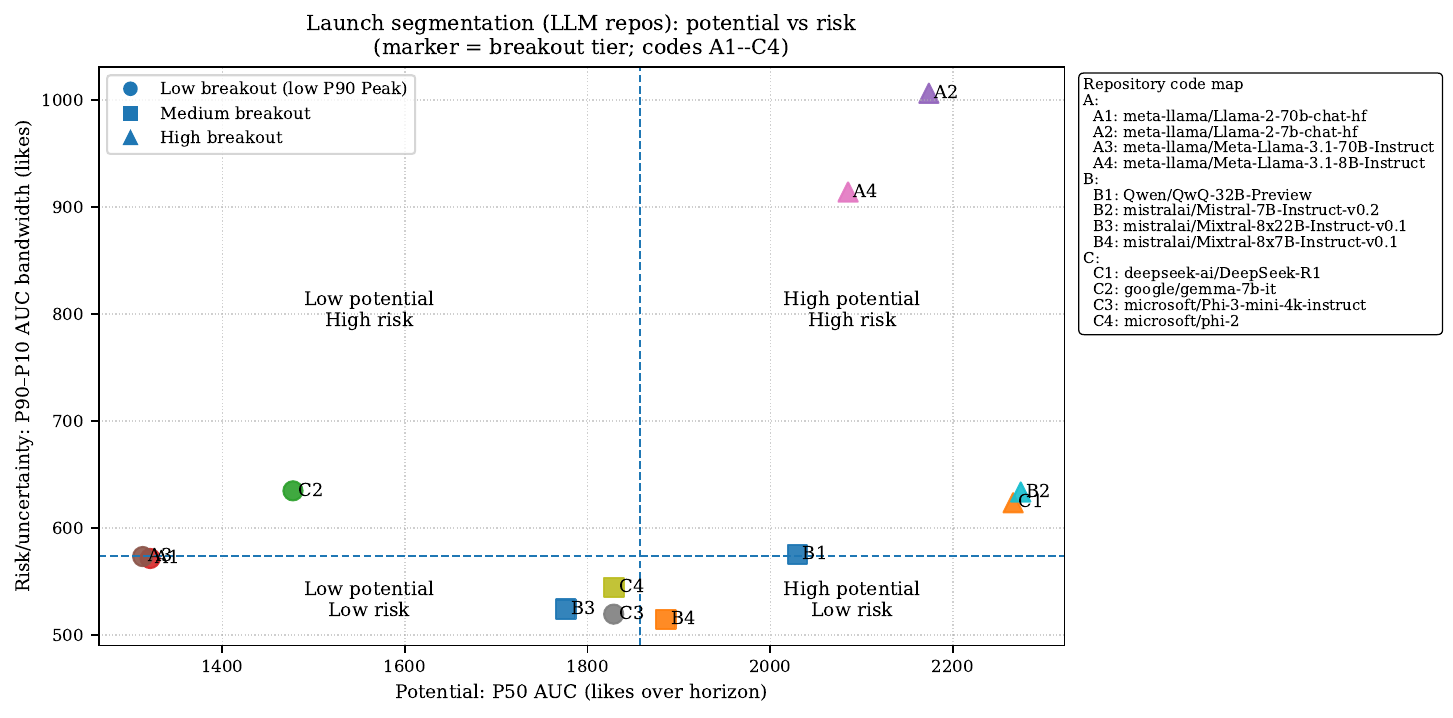}
}
\caption{Study~2 (LLM adoption): (a) typical adoption load versus breakout potential, with point size indicating uncertainty (P90--P10 AUC); (b) potential--risk segmentation for LLM repositories based on median splits of P50 AUC and AUC bandwidth.}
\label{fig:dash_dashboard_and_segmentation_A1C4}
\end{figure}

Figure~\ref{fig:dash_playbook_A1C4} translates the resulting 2$\times$2 taxonomy into an actionable playbook. High-potential, high-risk releases require greater emphasis on capacity, support, and burst capture. High-potential, low-risk releases are better suited to broad distribution with low friction. Low-potential, high-risk releases should prioritize de-risking by reducing access or deployment barriers. Low-potential, low-risk releases can support a lightweight rollout and community seeding strategy.

\begin{figure}[htbp]
\centering
\includegraphics[width=0.7\linewidth]{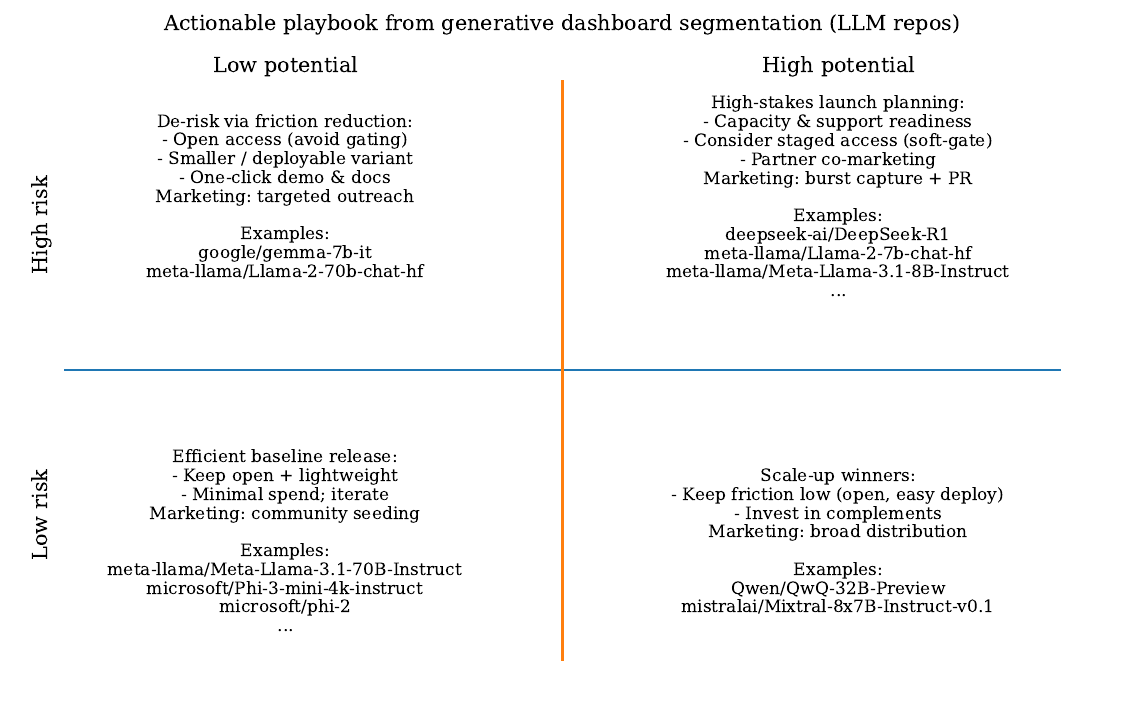}
\caption{Actionable playbook from generative launch segmentation. Each quadrant maps potential and risk segments to release actions, with candidate repositories listed in each quadrant.}
\label{fig:dash_playbook_A1C4}
\end{figure}

\textbf{Qualitative triangulation with public release rationales.}
We further triangulate our dashboard segmentation with publicly stated rollout rationales of the six teams behind the 12 repositories, namely Meta/Llama, DeepSeek, Google/Gemma, Microsoft/Phi, Mistral/Mixtral, and Qwen/QwQ. Across teams, two recurrent themes align closely with the CDLF-based summaries: exposure and friction. High-visibility releases are often paired with broad platform distribution and staged channel rollout, which is consistent with our high-potential segments where capacity, support, and burst capture are most salient.\footnote{See public availability announcements and model-catalog listings for Llama-family deployments on major channels, including AWS Bedrock ``What's New,'' the Microsoft Azure AI model catalog, and Google Cloud Vertex AI partner-model documentation. Retrieved January 13, 2026: \url{https://aws.amazon.com/about-aws/whats-new/}; \url{https://ai.azure.com/catalog/models/}; \url{https://cloud.google.com/vertex-ai}.}

By contrast, gated releases that require users to accept license or usage terms before download directly instantiate the friction mechanism highlighted by the feature-conditioned forecasts and the segmentation results.\footnote{On the Hugging Face Hub, gated access is indicated on model pages and typically requires accepting license or usage terms prior to file access. See Hugging Face documentation and example model pages, such as \texttt{meta-llama/Llama-2-7b-chat-hf} and \texttt{google/gemma-7b-it}. Retrieved January 13, 2026: \url{https://huggingface.co/docs}; \url{https://huggingface.co/meta-llama/Llama-2-7b-chat-hf}; \url{https://huggingface.co/google/gemma-7b-it}.} Efficiency- and deployability-oriented strategies, including sparse mixture-of-experts designs and smaller models, likewise align with the idea that easier deployment can support broader and more stable diffusion.\footnote{See the official release notes and positioning statements for Mixtral and Phi model families, including Mistral AI news releases and Microsoft Azure AI / TechCommunity materials. Retrieved January 13, 2026: \url{https://mistral.ai/news/}; \url{https://techcommunity.microsoft.com/}.} Preview-style releases are also consistent with staged rollout under elevated uncertainty.\footnote{Example: \texttt{Qwen/QwQ-32B-Preview} model page on the Hugging Face Hub. Retrieved January 13, 2026: \url{https://huggingface.co/Qwen/QwQ-32B-Preview}.} Taken together, these qualitative patterns do not establish causality, but they provide a useful triangulation: the main dimensions recovered by CDLF are also visible in how leading teams publicly frame their own release strategies.
\section{Concluding Remarks}\label{sec6}

This paper proposes CDLF, a unified generative framework for cold-start life-cycle forecasting in the pre-launch and early post-launch phases. Across both empirical studies, CDLF improves upon classical diffusion models, Bayesian updating approaches, and machine-learning baselines in both point accuracy and probabilistic forecast quality over short and medium horizons. More broadly, the results show that conditional generative forecasting can improve early-stage uncertainty quantification while preserving coherent dependence across future periods. 

From a managerial perspective, CDLF provides a unified forecasting mechanism across information regimes. It supports prediction before launch, when no product-specific observations are available, and then updates those forecasts as early evidence emerges. This feature is useful because it allows forecasts to be revised in a timely and disciplined way, thereby supporting better adjustment of launch, inventory, and capacity plans under severe uncertainty.

Several directions for future research follow naturally. One is to incorporate richer contextual signals and interventions, including marketing actions, search trends, platform exposure, and other demand-shaping variables. A second is to extend the framework to multivariate trajectory generation so that multiple operationally relevant outcomes can be forecast jointly. A third is to connect the predictive distributions generated by CDLF to downstream planning problems, such as launch-stage capacity planning, compute deployment, support provisioning, and staged rollout decisions under uncertainty.

% Acknowledgments here
\ACKNOWLEDGMENT{%
% Enter the text of acknowledgments here
}% Leave this (end of acknowledgment)

\bibliographystyle{informs2014} % outcomment this and next line in Case 1
%\bibliography{<your bib file$\textbf{s}$>} % if more than one, comma separated
\bibliography{name}
% Appendix here
% Options are (1) APPENDIX (with or without general title) or 
%             (2) APPENDICES (if it has more than one unrelated sections)
% Outcomment the appropriate case if necessary
%
%\begin{APPENDIX}{<Title of the Appendix>}
% \end{APPENDIX}
%
%   or 
%

\ECSwitch

\ECHead{\centering Online Appendix to\\
``Cold-Start Forecasting of New Product Life-Cycles via Conditional Diffusion Models''}

\makeatletter
% section / subsection
\setcounter{section}{0}
\setcounter{subsection}{0}
\setcounter{subsubsection}{0}
\renewcommand{\thesection}{\Alph{section}}
\renewcommand{\thesubsection}{\thesection.\arabic{subsection}}
\renewcommand{\thesubsubsection}{\thesubsection.\arabic{subsubsection}}

% equation
\setcounter{equation}{0}
\renewcommand{\theequation}{\thesection.\arabic{equation}}
\@addtoreset{equation}{section}

% figure / table
\setcounter{figure}{0}
\setcounter{table}{0}
\renewcommand{\thefigure}{\thesection.\arabic{figure}}
\renewcommand{\thetable}{\thesection.\arabic{table}}
\@addtoreset{figure}{section}
\@addtoreset{table}{section}

% hyperref anchors
\renewcommand{\theHsection}{appendix.\Alph{section}}
\renewcommand{\theHsubsection}{appendix.\Alph{section}.\arabic{subsection}}
\renewcommand{\theHsubsubsection}{appendix.\Alph{section}.\arabic{subsection}.\arabic{subsubsection}}
\renewcommand{\theHequation}{appendix.\Alph{section}.\arabic{equation}}
\renewcommand{\theHfigure}{appendix.\Alph{section}.\arabic{figure}}
\renewcommand{\theHtable}{appendix.\Alph{section}.\arabic{table}}
\renewcommand{\theHlemma}{appendix.\Alph{section}.\arabic{lemma}}
\renewcommand{\theHtheorem}{appendix.\Alph{section}.\arabic{theorem}}
\renewcommand{\theHproposition}{appendix.\Alph{section}.\arabic{proposition}}
\makeatother

This online appendix provides supplementary material for the main text. It first reports additional empirical evidence on the similarity-weight fusion design used in the contextual encoder, complementing the methodological discussion in Section~\ref{sec31}. It then presents the detailed proof of the main forecasting error bound and consistency result, together with the intermediate lemmas used in the argument. Next, it collects the technical conditions underlying the theoretical analysis, including the construction of the reachable set and the moment/support conditions required for the Wasserstein-based error metrics to be well defined. The appendix also provides a numerical illustration of the stability margin, along with a quantitative sensitivity analysis, to connect the abstract multi-step error bound with observable forecasting behavior under recursive generation. Finally, it presents implementation-oriented sufficient conditions that make the stability requirement for the GRU state-transition model checkable in practice.
\section{Ablation on Similarity-Weight Fusion}
\label{app:weight_fusion_2ways}

This appendix section supplements the contextual-information encoding and state-construction discussion in Section~\ref{sec31}. In the main text, the reference-set representation is formed by combining similarity weights with encoded reference trajectories through concatenation and projection. Here we isolate that design choice and compare it with a simpler multiplicative weighting rule. The purpose is to clarify why the proposed fusion operator is retained in the main model and to show that its empirical advantage is not driven by other components of the forecasting architecture.

We use bold lowercase letters for vectors and bold uppercase letters for matrices. The target multivariate time series is denoted by $\mathbf{x}_{1:T}=\{\mathbf{x}_t\}_{t=1}^{T}$, where $\mathbf{x}_t\in\mathbb{R}^{D}$ is the $D$-dimensional observation vector at time step $t$. The reference set $\mathcal{X}=\{\mathbf{x}^{(k)}_{1:T}\}_{k=1}^{K}$ contains $K$ reference trajectories. Each reference trajectory is encoded as $\mathbf{h}_k\in\mathbb{R}^{d}$, the static-descriptor embedding is $\mathbf{h}_s\in\mathbb{R}^{d_s}$, and the aggregated reference-set representation is $\mathbf{h}_{\mathcal{X}}\in\mathbb{R}^{d}$. The similarity weight satisfies $\omega_k\in(0,1)$ and $\sum_{k=1}^{K}\omega_k=1$. Operators and functions are described as follows.
\begin{itemize}
    \item \textbf{Concatenation}: $[\mathbf{a};\mathbf{b}]$ concatenates vectors along the feature dimension. If $\mathbf{a}\in\mathbb{R}^{m}$ and $\mathbf{b}\in\mathbb{R}^{n}$, then $[\mathbf{a};\mathbf{b}]\in\mathbb{R}^{m+n}$. When $\mathbf{b}$ is a scalar (e.g., $\omega_k$), $[\mathbf{h}_k;\omega_k]\in\mathbb{R}^{d+1}$.
    \item \textbf{Linear map}: $\mathbf{W}_\omega$. Here $\mathbf{W}_{\omega}\in\mathbb{R}^{d\times(d+1)}$ projects a $(d{+}1)$-dimensional vector back to $d$ dimensions.
    \item \textbf{ReLU}: $\mathrm{ReLU}(z)=\max(0,z)$, applied element-wise.
    \item \textbf{Softmax}: $\mathrm{softmax}(a_k)=\exp(a_k)/\sum_j\exp(a_j)$.
    \item \textbf{Ordering convention for set-to-sequence}: since $\mathrm{RNN}_{\text{ref}}(\cdot)$ takes a sequence input, we feed references sorted by $\omega_k$ (or similarity scores) in descending order to avoid order ambiguity.
\end{itemize}

Study~1 uses the Intel microprocessor weekly sales dataset (86 CPU SKUs, up to 187 weeks), where each SKU is described by static descriptors (e.g., generation, core count, clock speed, and average selling price). We follow the Study~1 preprocessing and evaluation setup, including pre-launch forecasting (static descriptors plus references only) and early post-launch forecasting (incrementally incorporating $\mathbf{x}_{1:t_0-1}$). For each reference trajectory $\mathbf{x}^{(k)}_{1:T}$, we obtain $\mathbf{h}_k=\psi(\mathbf{x}^{(k)}_{1:T})$ using a shared encoder. Given normalized similarity weights $\omega_k$ computed from static descriptors $\mathbf{s}$, we compare:
(A) multiplicative scaling: $\tilde{\mathbf{h}}_k=\omega_k\mathbf{h}_k$;
(B) concatenation plus projection: $\tilde{\mathbf{h}}_k=\mathrm{ReLU}\!\left(\mathbf{W}_{\omega}[\mathbf{h}_k;\omega_k]+\mathbf{b}_{\omega}\right)$. We then apply the same aggregator $\mathbf{h}_{\mathcal{X}}=\mathrm{RNN}_{\text{ref}}(\{\tilde{\mathbf{h}}_1,\tilde{\mathbf{h}}_2,\dots,\tilde{\mathbf{h}}_K\})$.
To avoid order ambiguity, references are fed into $\mathrm{RNN}_{\text{ref}}$ sorted by $\omega_k$ in descending order.

We follow the exact Study~1 data split, training schedule, hyperparameters, and random seed list. Both variants use the same reference retrieval procedure and reference set size $K$, share the same $\psi(\cdot)$, $\mathrm{RNN}_{\text{ref}}$, and generative/forecasting backbone, and differ only in the fusion operator. We report the same Study~1 metrics: MAE and MCRPS (both scaled by $\times 10^3$) for the 1--8 and 9--16 weeks-ahead horizons.
\begin{table}[htp]
\small
\centering
\caption{Study~1: ablation on similarity-weight fusion (lower is better; $\times 10^3$ scaling).}
\label{tab:ablation_weight_fusion_2ways}
\begin{tabular}{lcccc}
\toprule
\multirow[c]{2}{*}{\centering Fusion method}
& \multicolumn{2}{c}{1--8 weeks ahead}
& \multicolumn{2}{c}{9--16 weeks ahead} \\
\cmidrule(lr){2-3}\cmidrule(lr){4-5}
& MAE $\downarrow$ & MCRPS $\downarrow$ & MAE $\downarrow$ & MCRPS $\downarrow$ \\
\midrule
(A) Multiplicative scaling: $\tilde{\mathbf{h}}_k=\omega_k\mathbf{h}_k$
& 14.86 & 10.41 & 20.28 & 15.43 \\
(B) Concat + proj (ReLU): $\tilde{\mathbf{h}}_k=\mathrm{ReLU}\!\left(\mathbf{W}_{\omega}[\mathbf{h}_k;\omega_k]+\mathbf{b}_{\omega}\right)$
& 14.52 & 10.16 & 19.96 & 15.12 \\
\bottomrule
\end{tabular}
\end{table}
In Study~1, $\omega_k$ is computed from static descriptors $\mathbf{s}$ and can be viewed as a relevance prior, but static similarity does not necessarily imply perfect similarity in sales-curve shapes. The multiplicative baseline $\omega_k\mathbf{h}_k$ imposes a rigid global attenuation and may suppress informative references when $\omega_k$ is noisy or the softmax distribution becomes overly sharp. By contrast, concatenation with a learnable projection treats $\omega_k$ as a conditioning feature and learns how similarity should modulate reference content through a nonlinear mapping, which is typically more robust and better preserves moderately similar but pattern-critical references.
\section{Proof of Theorem~\ref{thm:multi_step_consistency_main}}

This appendix section provides the detailed proof of Theorem~\ref{thm:multi_step_consistency_main}, which establishes the multi-step forecasting error bound and consistency result in the main text. The proof proceeds in two steps. We first establish auxiliary lemmas that separate one-step generative error from latent-state propagation error and characterize the recursion of the latent-state mismatch. We then combine these ingredients to derive the geometric multi-step bound under the stability margin condition.

\subsection{Lemmas for the Proof of Theorem~\ref{thm:multi_step_consistency_main}}

This subsection collects the two intermediate lemmas used in the proof. The first controls how latent-state error propagates through the recurrent transition, and the second decomposes the one-step forecasting error into a generative term and a propagated state-mismatch term.

\begin{lemma}[Latent-state error recursion]
    \label{lem:hidden_recursion_main}
    Suppose that Assumption~\ref{assump:combined_main} holds.
    Define the \emph{transition error} relative to the best-in-class transition $f^\star$ as
    \begin{equation}
    \varepsilon_f
    :=
    \sup_{t}\ 
    \mathbb{E}\Big[
    \big\|
    f_\phi(\mathbf{h}_{t-1}^\star,\mathbf{x}_t,\mathbf{c})
    -
    f^\star(\mathbf{h}_{t-1}^\star,\mathbf{x}_t,\mathbf{c})
    \big\|
    \Big].
    \end{equation}
    Then the latent-state error satisfies
    \begin{equation}
    \label{eq:hidden_recursion_main}
    E_t
    \le
    \rho E_{t-1}
    +
    L_x\,\mathbb{E}\big[\|\widehat{\mathbf{x}}_t-\mathbf{x}_t\|\big]
    +
    \varepsilon_f.
    \end{equation}
\end{lemma}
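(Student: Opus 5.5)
The plan is a one-step telescoping argument based on the triangle inequality. Fix $t$ and start from the two recursions $\widehat{\mathbf h}_t = f_\phi(\widehat{\mathbf h}_{t-1},\widehat{\mathbf x}_t,\mathbf c)$ and $\mathbf h_t^\star = f^\star(\mathbf h_{t-1}^\star,\mathbf x_t,\mathbf c)$. Insert the intermediate quantity $f_\phi(\mathbf h_{t-1}^\star,\mathbf x_t,\mathbf c)$, i.e.\ the learned transition evaluated at the best-in-class state and the true input. This gives the pointwise decomposition
\begin{align*}
\|\widehat{\mathbf h}_t-\mathbf h_t^\star\|
&\le \big\|f_\phi(\widehat{\mathbf h}_{t-1},\widehat{\mathbf x}_t,\mathbf c)-f_\phi(\mathbf h_{t-1}^\star,\mathbf x_t,\mathbf c)\big\| \\
&\quad+ \big\|f_\phi(\mathbf h_{t-1}^\star,\mathbf x_t,\mathbf c)-f^\star(\mathbf h_{t-1}^\star,\mathbf x_t,\mathbf c)\big\|.
\end{align*}

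The first term is handled by Assumption~\ref{assump:combined_main}(2), applied to the learned transition $f_\phi$. It gives a bound of $\rho\|\widehat{\mathbf h}_{t-1}-\mathbf h_{t-1}^\star\| + L_x\|\widehat{\mathbf x}_t-\mathbf x_t\|$. Applying this requires both $\widehat{\mathbf h}_{t-1}$ and $\mathbf h_{t-1}^\star$ to lie in the reachable set $\mathcal H$, and both $\widehat{\mathbf x}_t$ and $\mathbf x_t$ to lie in the relevant input set. I would invoke the reachable-set conditions of Appendix~\ref{app:reachable_set} for this.

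The second term is exactly the quantity inside the supremum defining $\varepsilon_f$. After taking expectations it is bounded by $\varepsilon_f$ for every $t$.

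Next I take expectations of the whole pointwise inequality and use linearity. This yields $E_t\le \rho E_{t-1}+L_x\mathbb E\|\widehat{\mathbf x}_t-\mathbf x_t\|+\varepsilon_f$, which is \eqref{eq:hidden_recursion_main}.

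The only subtle point is making sense of the expectation over the joint law of $(\widehat{\mathbf x}_t,\mathbf x_t,\widehat{\mathbf h}_{t-1},\mathbf h_{t-1}^\star)$. The lemma is stated for an arbitrary coupling of the generated and true trajectories on a common probability space, and the inequality holds pathwise, so any coupling works. The later proof of the theorem can then choose an optimal coupling, so that $\mathbb E\|\widehat{\mathbf x}_t-\mathbf x_t\|$ relates to a $W_1$ distance.

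Integrability of each term follows from the moment and support conditions in the appendix. I expect this measurability and coupling bookkeeping to be the only real obstacle; the algebra itself is immediate.
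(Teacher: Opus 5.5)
Your proposal is correct and matches the paper's proof essentially line for line. The paper also adds and subtracts $f_\phi(\mathbf{h}_{t-1}^\star,\mathbf{x}_t,\mathbf{c})$, bounds the first difference via Assumption~\ref{assump:combined_main}(2) with $(\mathbf{h},\mathbf{x})=(\widehat{\mathbf{h}}_{t-1},\widehat{\mathbf{x}}_t)$ and $(\mathbf{h}',\mathbf{x}')=(\mathbf{h}_{t-1}^\star,\mathbf{x}_t)$, and bounds the expectation of the second by $\varepsilon_f$; your remarks on the reachable set and on the inequality holding pathwise under any coupling are sound bookkeeping that the paper leaves implicit here.
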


Lemma~\ref{lem:hidden_recursion_main} shows that the latent-state error propagates through three channels: the contracted previous mismatch (factor $\rho$), the propagated input mismatch (factor $L_x$), and the transition error $\varepsilon_f$. The term $\varepsilon_f$ measures how well the learned transition network $f_\phi$ approximates the best-in-class transition $f^\star$ along the best-in-class trajectory. In general, $\varepsilon_f$ can be further decomposed into an approximation error (due to the finite capacity of the transition architecture) and a statistical error (due to finite training samples). Accordingly, as the training sample size increases and the transition architecture becomes more expressive, $\varepsilon_f$ decreases and can vanish asymptotically.

\noindent\textit{Proof of Lemma~\ref{lem:hidden_recursion_main}.}\quad
From the learned dynamics and the best-in-class dynamics,
\[
\widehat{\mathbf{h}}_t=f_\phi(\widehat{\mathbf{h}}_{t-1},\widehat{\mathbf{x}}_t,\mathbf{c}),
\qquad
\mathbf{h}_t^\star=f^\star(\mathbf{h}_{t-1}^\star,\mathbf{x}_t,\mathbf{c}).
\]
Add and subtract $f_\phi(\mathbf{h}_{t-1}^\star,\mathbf{x}_t,\mathbf{c})$, then apply the triangle inequality:
\begin{align}
\| \widehat{\mathbf{h}}_t-\mathbf{h}_t^\star\|
&\le
\big\|f_\phi(\widehat{\mathbf{h}}_{t-1},\widehat{\mathbf{x}}_t,\mathbf{c})
      -f_\phi(\mathbf{h}_{t-1}^\star,\mathbf{x}_t,\mathbf{c})\big\|
+\big\|f_\phi(\mathbf{h}_{t-1}^\star,\mathbf{x}_t,\mathbf{c})
      -f^\star(\mathbf{h}_{t-1}^\star,\mathbf{x}_t,\mathbf{c})\big\|.
\label{eq:hidden_recursion_step1_app}
\end{align}
Apply Assumption~\ref{assump:combined_main} with
$(\mathbf{h},\mathbf{x})=(\widehat{\mathbf{h}}_{t-1},\widehat{\mathbf{x}}_t)$ and
$(\mathbf{h}',\mathbf{x}')=(\mathbf{h}_{t-1}^\star,\mathbf{x}_t)$ to bound the first term:
\begin{equation}
\label{eq:hidden_recursion_step2_app}
\big\|f_\phi(\widehat{\mathbf{h}}_{t-1},\widehat{\mathbf{x}}_t,\mathbf{c})
      -f_\phi(\mathbf{h}_{t-1}^\star,\mathbf{x}_t,\mathbf{c})\big\|
\le
\rho\|\widehat{\mathbf{h}}_{t-1}-\mathbf{h}_{t-1}^\star\|
+L_x\|\widehat{\mathbf{x}}_t-\mathbf{x}_t\|.
\end{equation}
Substituting \eqref{eq:hidden_recursion_step2_app} into \eqref{eq:hidden_recursion_step1_app} yields
\[
\|\widehat{\mathbf{h}}_t-\mathbf{h}_t^\star\|
\le
\rho\|\widehat{\mathbf{h}}_{t-1}-\mathbf{h}_{t-1}^\star\|
+L_x\|\widehat{\mathbf{x}}_t-\mathbf{x}_t\|
+\big\|f_\phi(\mathbf{h}_{t-1}^\star,\mathbf{x}_t,\mathbf{c})
      -f^\star(\mathbf{h}_{t-1}^\star,\mathbf{x}_t,\mathbf{c})\big\|.
\]
Taking expectations and using $E_t:=\mathbb{E}\|\widehat{\mathbf{h}}_t-\mathbf{h}_t^\star\|$ gives
\begin{align*}
E_t
&\le
\rho\,\mathbb{E}\|\widehat{\mathbf{h}}_{t-1}-\mathbf{h}_{t-1}^\star\|
+L_x\,\mathbb{E}\|\widehat{\mathbf{x}}_t-\mathbf{x}_t\|
+\mathbb{E}\big\|f_\phi(\mathbf{h}_{t-1}^\star,\mathbf{x}_t,\mathbf{c})
      -f^\star(\mathbf{h}_{t-1}^\star,\mathbf{x}_t,\mathbf{c})\big\| \\
&=
\rho E_{t-1}
+L_x\,\mathbb{E}\|\widehat{\mathbf{x}}_t-\mathbf{x}_t\|
+\mathbb{E}\big\|f_\phi(\mathbf{h}_{t-1}^\star,\mathbf{x}_t,\mathbf{c})
      -f^\star(\mathbf{h}_{t-1}^\star,\mathbf{x}_t,\mathbf{c})\big\|.
\end{align*}
By the definition of $\varepsilon_f$, the last expectation is bounded by $\varepsilon_f$, which yields \eqref{eq:hidden_recursion_main}.
\hfill$\Box$

\vspace{0.5em}

\begin{lemma}[One-step prediction error decomposition]
    \label{lem:one_step_decomp_main}
    Under Assumption~\ref{assump:combined_main}, define $\varepsilon_{\mathrm{gen}}$ via the uniform one-step generative bound on $\mathcal{H}$: for all $t$ and all $\mathbf{h}\in\mathcal{H}$,
    \begin{equation}
    \label{eq:gen_uniform_bound_main}
    W_1\!\left(p_\theta(\cdot\mid \mathbf{h}),\,p^\star(\cdot\mid \mathbf{h})\right)\le \varepsilon_{\mathrm{gen}}.
    \end{equation}
    Then for any $t$,
    \begin{equation}
    \label{eq:one_step_decomp_main}
    W_1\!\left(p_\theta(\cdot\mid \widehat{\mathbf{h}}_{t-1}),\,p^\star(\cdot\mid \mathbf{h}_{t-1}^\star)\right)
    \le \varepsilon_{\mathrm{gen}} + L_P\|\widehat{\mathbf{h}}_{t-1}-\mathbf{h}_{t-1}^\star\|.
    \end{equation}
    Consequently,
    \begin{equation}
    \label{eq:one_step_decomp_expect_main}
    \Delta_t \le \varepsilon_{\mathrm{gen}} + L_P E_{t-1}.
    \end{equation}
\end{lemma}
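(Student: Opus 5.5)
The plan is a single triangle inequality for $W_1$ around an intermediate distribution, followed by taking expectations. Fix $t$ and a realization of the pair $(\widehat{\mathbf h}_{t-1},\mathbf h_{t-1}^\star)$. The natural intermediate law is $p^\star(\cdot\mid\widehat{\mathbf h}_{t-1})$: the best-in-class generator evaluated at the learned latent state. Since $W_1$ is a metric on laws with finite first moment, which is guaranteed by the conditions of Appendix~\ref{app:reachable_set}, we get
\[
W_1\!\left(p_\theta(\cdot\mid \widehat{\mathbf h}_{t-1}),p^\star(\cdot\mid \mathbf h_{t-1}^\star)\right)
\le
W_1\!\left(p_\theta(\cdot\mid \widehat{\mathbf h}_{t-1}),p^\star(\cdot\mid \widehat{\mathbf h}_{t-1})\right)
+W_1\!\left(p^\star(\cdot\mid \widehat{\mathbf h}_{t-1}),p^\star(\cdot\mid \mathbf h_{t-1}^\star)\right).
\]

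Each term is then handled by one hypothesis.

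\textbf{First term (same state, different generators).} Both laws are conditioned on the same state. Provided $\widehat{\mathbf h}_{t-1}\in\mathcal H$, the uniform generative bound \eqref{eq:gen_uniform_bound_main} makes this term at most $\varepsilon_{\mathrm{gen}}$.

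\textbf{Second term (same generator, different states).} Both laws come from the best-in-class generator $p^\star$. Provided $\widehat{\mathbf h}_{t-1}$ and $\mathbf h_{t-1}^\star$ both lie in $\mathcal H$, Assumption~\ref{assump:combined_main}(1) bounds it by $L_P\|\widehat{\mathbf h}_{t-1}-\mathbf h_{t-1}^\star\|$.

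Adding the two bounds gives \eqref{eq:one_step_decomp_main} pointwise.

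For \eqref{eq:one_step_decomp_expect_main}, take expectations of the pointwise inequality over the randomness of the recursively generated $\widehat{\mathbf h}_{t-1}$ and of $\mathbf h_{t-1}^\star$. Because $\varepsilon_{\mathrm{gen}}$ is a deterministic constant, linearity and monotonicity of expectation give $\Delta_t\le \varepsilon_{\mathrm{gen}}+L_P\,\mathbb E\|\widehat{\mathbf h}_{t-1}-\mathbf h_{t-1}^\star\| = \varepsilon_{\mathrm{gen}}+L_P E_{t-1}$, using the definitions of $\Delta_t$ in \eqref{eq:delta_E_def} and of $E_t$.

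The only delicate point is technical, not conceptual. Both the uniform generative bound and Assumption~\ref{assump:combined_main}(1) hold only on the reachable set $\mathcal H$, so I must check that every state produced by either recursion stays in $\mathcal H$. I would handle this by appealing to the definition of $\mathcal H$ in Appendix~\ref{app:reachable_set} as the set of states reachable under both the learned and the best-in-class dynamics. The same appendix should supply the measurability and integrability that make the map $\mathbf h\mapsto W_1(\cdot)$ a well-defined random variable, so that its expectation exists. The possible dependence of $p^\star_t$ on $t$ causes no difficulty, because the Lipschitz bound in Assumption~\ref{assump:combined_main}(1) is uniform in $t$.
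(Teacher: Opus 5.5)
Your proposal is correct and matches the paper's proof essentially step for step. It uses the same intermediate law $p^\star(\cdot\mid\widehat{\mathbf h}_{t-1})$, the $W_1$ triangle inequality, the uniform generative bound for the first term, Assumption~\ref{assump:combined_main}(1) for the second, and linearity of expectation to reach $\Delta_t\le\varepsilon_{\mathrm{gen}}+L_P E_{t-1}$, while deferring reachability and integrability to Appendix~\ref{app:reachable_set}, which is exactly why the paper invokes Assumption~\ref{assump:moment_support_main}.
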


Lemma~\ref{lem:one_step_decomp_main} decomposes the one-step forecasting error into two components: (i) the generative error $\varepsilon_{\mathrm{gen}}$, which measures how well the learned conditional generator $p_\theta(\cdot\mid \mathbf{h})$ approximates the best-in-class generator $p^\star(\cdot\mid \mathbf{h})$ uniformly over the reachable set $\mathcal{H}$, and (ii) the propagated latent-state error $E_{t-1}$.

\noindent\textit{Proof of Lemma~\ref{lem:one_step_decomp_main}.}\quad
Fix any $t$ and define
\[
P := p_\theta(\cdot\mid \widehat{\mathbf{h}}_{t-1}),\qquad
R := p_t^\star(\cdot\mid \widehat{\mathbf{h}}_{t-1}),\qquad
Q := p_t^\star(\cdot\mid \mathbf{h}_{t-1}^\star).
\]
By Assumption~\ref{assump:moment_support_main}, $\widehat{\mathbf{h}}_{t-1},\mathbf{h}_{t-1}^\star\in\mathcal{H}$ a.s. and all three laws belong to $\mathcal{P}_1(\mathbb{R}^d)$. Since $W_1$ is a metric on $\mathcal{P}_1(\mathbb{R}^d)$, it satisfies the triangle inequality:
\[
W_1(P,Q)\le W_1(P,R)+W_1(R,Q).
\]
Because $\widehat{\mathbf{h}}_{t-1}\in\mathcal{H}$ a.s., substituting $\mathbf{h}=\widehat{\mathbf{h}}_{t-1}$ into \eqref{eq:gen_uniform_bound_main} yields
\[
W_1(P,R)=W_1\!\left(p_\theta(\cdot\mid \widehat{\mathbf{h}}_{t-1}),\,p_t^\star(\cdot\mid \widehat{\mathbf{h}}_{t-1})\right)\le \varepsilon_{\mathrm{gen}}
\quad\text{a.s.}
\]
Because $\widehat{\mathbf{h}}_{t-1},\mathbf{h}_{t-1}^\star\in\mathcal{H}$ a.s., Assumption~\ref{assump:combined_main} gives
\[
W_1(R,Q)=W_1\!\left(p_t^\star(\cdot\mid \widehat{\mathbf{h}}_{t-1}),\,p_t^\star(\cdot\mid \mathbf{h}_{t-1}^\star)\right)
\le L_P\|\widehat{\mathbf{h}}_{t-1}-\mathbf{h}_{t-1}^\star\|
\quad\text{a.s.}
\]
Combining the two bounds yields the pathwise inequality \eqref{eq:one_step_decomp_main}. Taking expectations on both sides and using linearity of expectation gives
\[
\Delta_t=\mathbb{E}\Big[W_1(P,Q)\Big]
\le \varepsilon_{\mathrm{gen}} + L_P\,\mathbb{E}\|\widehat{\mathbf{h}}_{t-1}-\mathbf{h}_{t-1}^\star\|
= \varepsilon_{\mathrm{gen}} + L_P E_{t-1},
\]
which is \eqref{eq:one_step_decomp_expect_main}.
\hfill$\Box$

\subsection{Proof of Theorem~\ref{thm:multi_step_consistency_main}}
\label{app:proof_uniform_bound}

With the preceding lemmas in place, we now prove Theorem~\ref{thm:multi_step_consistency_main} by linking one-step conditional distribution error to latent-state recursion and then controlling the resulting geometric accumulation over the forecasting horizon.

\noindent\textit{Proof of Theorem~\ref{thm:multi_step_consistency_main}.}\quad
Note that
$p_t^\star(\cdot\mid \mathcal{F}_{t-1}) \equiv p_t^\star(\cdot\mid \mathbf{h}_{t-1}^\star)$.
Lemma~\ref{lem:one_step_decomp_main} yields, for any $t\ge t_0$,
\begin{equation}
\label{eq:Delta_vs_E_thm_refined}
\Delta_t \le \varepsilon_{\mathrm{gen}} + L_P E_{t-1}.
\end{equation}
Lemma~\ref{lem:hidden_recursion_main} yields, for any $t\ge t_0$,
\begin{equation}
\label{eq:E_recursion_thm_refined}
E_{t-1}\le \rho E_{t-2} + L_x\,\mathbb{E}\|\widehat{\mathbf{x}}_{t-1}-\mathbf{x}_{t-1}\| + \varepsilon_{f}.
\end{equation}

It remains to control $\mathbb{E}\|\widehat{\mathbf{x}}_{t-1}-\mathbf{x}_{t-1}\|$ in terms of $\Delta_{t-1}$.
Let $\mathcal{G}_{t-2}:=\sigma(\widehat{\mathbf{h}}_{t-2},\mathbf{h}_{t-2}^\star)$ and define the $\mathcal{G}_{t-2}$-measurable random probability measures
\[
\widehat{P}_{t-1}:=p_\theta(\cdot\mid \widehat{\mathbf{h}}_{t-2}),
\qquad
P_{t-1}^\star:=p_{t-1}^\star(\cdot\mid \mathbf{h}_{t-2}^\star).
\]
By the definition of $W_1$ as an infimum over couplings, for any $\eta>0$ and for each realization of $(\widehat{P}_{t-1},P_{t-1}^\star)$ there exists a coupling $\pi_{t-1}^{(\eta)}\in\Pi(\widehat{P}_{t-1},P_{t-1}^\star)$ such that
\[
\int \|\widehat{\mathbf{x}}-\mathbf{x}\|\,\pi_{t-1}^{(\eta)}(d\widehat{\mathbf{x}},d\mathbf{x})
\le W_1(\widehat{P}_{t-1},P_{t-1}^\star)+\eta.
\]
Because $\mathbb{R}^d$ is Polish and the transport cost $\|\widehat{\mathbf{x}}-\mathbf{x}\|$ is continuous, standard measurable-selection results for optimal transport ensure that such $\eta$-optimal couplings can be chosen as a measurable coupling kernel of $(\widehat{P}_{t-1},P_{t-1}^\star)$; hence we may sample a pair $(\widehat{\mathbf{X}}_{t-1}^{(\eta)},\mathbf{X}_{t-1}^{(\eta)})$ conditionally on $\mathcal{G}_{t-2}$ from $\pi_{t-1}^{(\eta)}$.
Taking conditional expectations and then total expectations yields
\begin{align}
\mathbb{E}\big\|\widehat{\mathbf{X}}_{t-1}^{(\eta)}-\mathbf{X}_{t-1}^{(\eta)}\big\|
&\le
\mathbb{E}\Big[W_1\!\left(p_\theta(\cdot\mid \widehat{\mathbf{h}}_{t-2}),\,p_{t-1}^\star(\cdot\mid \mathbf{h}_{t-2}^\star)\right)\Big]+\eta \nonumber\\
&= \Delta_{t-1}+\eta
\le \overline{\Delta}+\eta.
\label{eq:coupling_to_Delta}
\end{align}
Letting $\eta\downarrow 0$ gives
\begin{equation}
\label{eq:input_mismatch_le_Delta}
\inf_{\pi\in \Pi(\widehat{P}_{t-1},P_{t-1}^\star)}
\mathbb{E}_{(\widehat{\mathbf{X}}_{t-1},\mathbf{X}_{t-1})\sim \pi}
\big\|\widehat{\mathbf{X}}_{t-1}-\mathbf{X}_{t-1}\big\|
\le \overline{\Delta}.
\end{equation}
The left-hand side is exactly the $1$-Wasserstein distance $W_1(\widehat{P}_{t-1},P_{t-1}^\star)$. Since \eqref{eq:E_recursion_thm_refined} depends only on an upper bound for the expected mismatch magnitude, we may bound the term $\mathbb{E}\|\widehat{\mathbf{x}}_{t-1}-\mathbf{x}_{t-1}\|$ in \eqref{eq:E_recursion_thm_refined} by \eqref{eq:input_mismatch_le_Delta}, obtaining
\[
E_{t-1}\le \rho E_{t-2} + L_x\,\overline{\Delta} + \varepsilon_{f}.
\]
Iterating from $t=t_0$ to any $t\in\{t_0,\dots,T\}$ yields
\[
E_{t-1}
\le
\rho^{t-t_0}\,E_{t_0-1}
+\sum_{j=t_0}^{t-1}\rho^{t-1-j}\Big(L_x\,\overline{\Delta}+\varepsilon_{f}\Big)
\le
E_{t_0-1} + \frac{L_x}{1-\rho}\overline{\Delta} + \frac{1}{1-\rho}\varepsilon_{f}.
\]
Substituting into \eqref{eq:Delta_vs_E_thm_refined} and taking supremum over $t\in\{t_0,\dots,T\}$ gives
\[
\overline{\Delta}
\le
\varepsilon_{\mathrm{gen}}
+
L_P\left(
\frac{L_x}{1-\rho}\overline{\Delta}
+
\frac{1}{1-\rho}\varepsilon_{f}
+
E_{t_0-1}
\right).
\]
Using $\kappa=\frac{L_P L_x}{1-\rho}<1$ to move the $\overline{\Delta}$ term to the left yields the final bound.
\hfill$\Box$

\section{Reachable Set Construction and Technical Conditions}
\label{app:reachable_set}

This appendix section collects the technical conditions used in the theoretical analysis and makes precise the domain on which the main stability assumptions are imposed. Specifically, it constructs the reachable set $\mathcal H$ for the recursive forecasting system, states the moment/support condition required for the Wasserstein-based error metrics to be well defined, and introduces a forward-invariance property that ensures these assumptions need only hold on the dynamically relevant region of the latent-state space. Together, these ingredients justify the local regularity conditions used in the proof of the main theorem.

\medskip
\subsection{Constructing the Reachable Set}

We begin by defining the set of latent states that can be visited by either the learned or the best-in-class recursion over the forecasting horizon. This construction identifies the effective state domain on which the theoretical comparison is carried out.

Fix the conditioning context $\mathbf{c}$ and the forecasting start time $t_0$.
Let $\mathcal H_{t_0-1}$ denote the set of possible initial latent states at time $t_0-1$, including both the best-in-class and learned states (e.g., the support, or a high-probability set) of $(\mathbf{h}_{t_0-1}^\star,\widehat{\mathbf{h}}_{t_0-1})$, where $\widehat{\mathbf{h}}_{t_0-1}$ encodes the available prefix history $\mathbf{x}_{1:t_0-1}$ together with the static context $\mathbf{c}$ (and $t_0=1$ corresponds to an empty prefix history, hence $t_0-1=0$).

For $t\ge t_0$, define $\mathcal H_t$ recursively as the forward-closure of $\mathcal H_{t-1}$ under both the best-in-class and learned recursions:
\[
\small
\mathcal H_t := \mathcal H_{t-1}
\ \cup\
\Big\{
f^\star(\mathbf{h},\mathbf{x},\mathbf{c})
:
\mathbf{h}\in\mathcal H_{t-1},
\mathbf{x}\in \mathrm{supp}\big(p_t^\star(\cdot\mid \mathbf{h})\big)
\Big\}
\ \cup\
\Big\{
f_\phi(\mathbf{h},\widehat{\mathbf{x}},\mathbf{c})
:
\mathbf{h}\in\mathcal H_{t-1},
\widehat{\mathbf{x}}\in \mathrm{supp}\big(p_\theta(\cdot\mid \mathbf{h})\big)
\Big\}.
\]
Equivalently, we start from all possible initial states, repeatedly apply one-step transitions under the best-in-class recursion $f^\star$ and the learned recursion $f_\phi$, and collect every state that can be reached. Both trajectories are included because the proof compares the two systems at every step. Finally, define the reachable set over the forecasting horizon as
\[
\mathcal H := \bigcup_{t=t_0-1}^{T} \mathcal H_t.
\]

\medskip
\subsection{Forward Invariance}

We next impose a forward-invariance condition on the reachable set. This condition ensures that once the recursion starts inside $\mathcal H$, all subsequent states remain in $\mathcal H$, so the local assumptions used in the proof can be applied recursively over time.

We assume $\mathcal H$ is forward-invariant for both the best-in-class and learned dynamics over $t=t_0-1,\dots,T$ (enforced in practice by bounded GRU activations together with output normalization or clipping). Formally, ``forward-invariant'' means that whenever $\mathbf h\in\mathcal H$, every one-step successor generated by either $f^\star$ (with $\mathbf x\sim p_t^\star(\cdot\mid\mathbf h)$) or $f_\phi$ (with $\widehat{\mathbf x}\sim p_\theta(\cdot\mid\mathbf h)$) also lies in $\mathcal H$; equivalently, once a trajectory enters $\mathcal H$ it cannot leave during the forecasting horizon. This property ensures that all local assumptions stated below can be reused recursively at every time step.

\medskip
\subsection{Moment Bound Assumption}

Finally, we state a moment/support condition on the conditional laws generated on $\mathcal H$. This condition guarantees that the relevant 1-Wasserstein distances are finite and that the expected marginal forecasting error $\Delta_t$ is well defined.

\begin{assumption}[Moment/support control on a forward-invariant reachable set]
\label{assump:moment_support_main}\label{assump:moment_support_app}
Let $\mathcal H$ be the forward-invariant reachable set over $t=t_0-1,\dots,T$. Assume that for every $t\in\{t_0,\dots,T\}$ and every $\mathbf h\in\mathcal H$, both the target and model conditionals belong to $\mathcal P_1(\mathbb R^d)$, and their first moments are uniformly bounded on $\mathcal H$:
\[
\sup_{t_0\le t\le T}\ \sup_{\mathbf h\in\mathcal H}\
\mathbb E_{\mathbf X\sim p_t^\star(\cdot\mid \mathbf h)}\big[\|\mathbf X\|\big]\ \le\ M_1\ <\ \infty,
\qquad
\sup_{t_0\le t\le T}\ \sup_{\mathbf h\in\mathcal H}\
\mathbb E_{\widehat{\mathbf X}\sim p_\theta(\cdot\mid \mathbf h)}\big[\|\widehat{\mathbf X}\|\big]\ \le\ \widehat M_1\ <\ \infty.
\]
Consequently, for any random latent states $(\widehat{\mathbf h}_{t-1},\mathbf h_{t-1}^\star)\in\mathcal H\times\mathcal H$, the conditional Wasserstein distance
\[
W_1\!\big(p_\theta(\cdot\mid \widehat{\mathbf h}_{t-1}),\,p_t^\star(\cdot\mid \mathbf h_{t-1}^\star)\big)
\]
is well defined and integrable, so $\Delta_t=\mathbb E[W_1(\cdot,\cdot)]$ is finite.

A sufficient, and practically enforceable, condition is compact support after the same normalization or clipping used in training and sampling: there exists $B_x>0$ such that, for all $t\in\{t_0,\dots,T\}$ and all $\mathbf h\in\mathcal H$,
\[
\|\mathbf X\|\le B_x \ \text{a.s. under } p_t^\star(\cdot\mid \mathbf h),
\qquad
\|\widehat{\mathbf X}\|\le B_x \ \text{a.s. under } p_\theta(\cdot\mid \mathbf h).
\]
\end{assumption}

\section{Numerical Illustration of the Stability Margin}
\label{sec:numerical_illustration}

This appendix section complements the theoretical analysis by providing a controlled numerical illustration of the stability margin $\kappa$. While the main text establishes an abstract multi-step Wasserstein error bound, the purpose here is to show how that bound maps into observable forecasting behavior under recursive generation. In particular, we visualize both the geometric damping mechanism in the stable regime and the transition to divergent error growth once the stability margin is violated.

To connect the abstract Wasserstein bounds with practical multi-step forecasting behavior, we design a controlled \emph{oracle} experiment in which the one-step ground-truth conditional law is fully specified and can be sampled exactly. The experiment visualizes (i) the geometric damping mechanism implied by Lemma~\ref{lem:hidden_recursion_main} and (ii) the transition from bounded to divergent error growth governed by the stability margin $\kappa$ in Theorem~\ref{thm:multi_step_consistency_main}.

We use a linear--Gaussian oracle for the one-step conditional law on the reachable set $\mathcal{H}$, and a model conditional law of the same family with shared covariance but a slightly misspecified linear mapping, hence a nonzero intrinsic one-step mismatch consistent with $\varepsilon_{\mathrm{gen}}$. To isolate recursive propagation, we evolve both the oracle and learned latent states through contractive linear recursions with contraction factor $\rho\in(0,1)$ and a fixed input-mixing operator, so that the effective constants $(\rho,L_x,L_P)$ are directly controlled. By varying these constants, equivalently by varying $\kappa$ while keeping the remaining components fixed, we generate stable ($\kappa<1$) and unstable ($\kappa>1$) regimes.

We estimate the outer expectations in $E_t$ and $\Delta_t$ using $R=10^4$ independent rollouts. In each rollout, we iteratively sample $\mathbf{x}_t$ from the oracle conditional, sample $\widehat{\mathbf{x}}_t$ from the model conditional given $\widehat{\mathbf{h}}_{t-1}$, and update $(\mathbf{h}_t^\star,\widehat{\mathbf{h}}_t)$ forward. The inner $W_1$ term is evaluated directly under the Gaussian oracle with shared covariance, so no numerical optimal-transport solver is required, and we average across rollouts to obtain $\widehat{E}_t$ and $\widehat{\Delta}_t$.

\begin{figure}[htbp]
\centering
\subfigure[Latent-state error dynamics $\widehat{E}_t$]{
\includegraphics[width=2.5in]{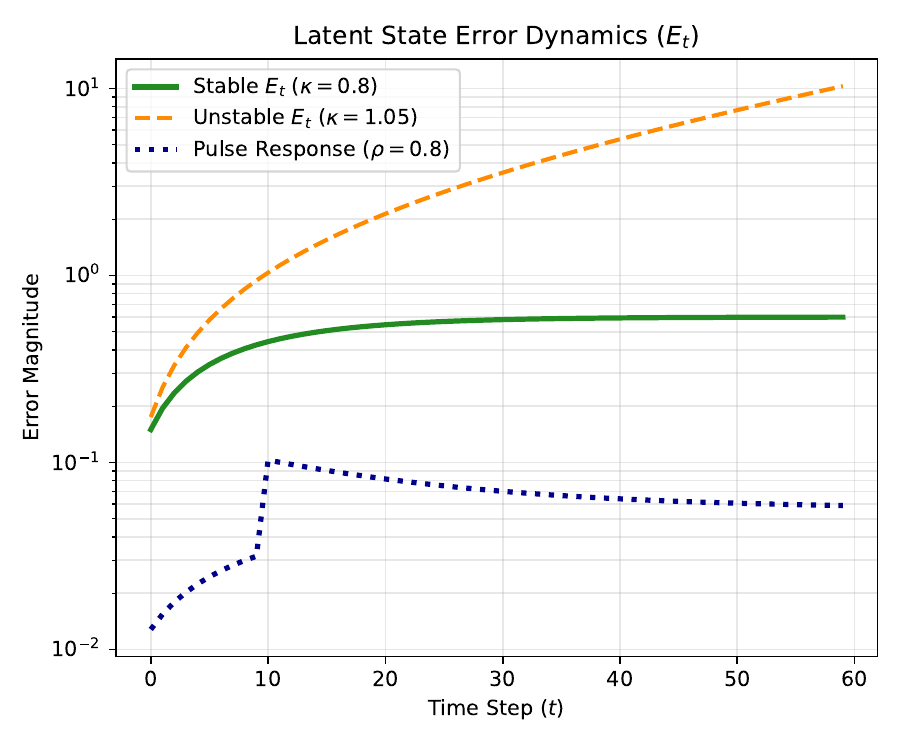}
}\label{fig:latent_error}
\subfigure[Expected one-step discrepancy $\widehat{\Delta}_t$]{
\includegraphics[width=2.5in]{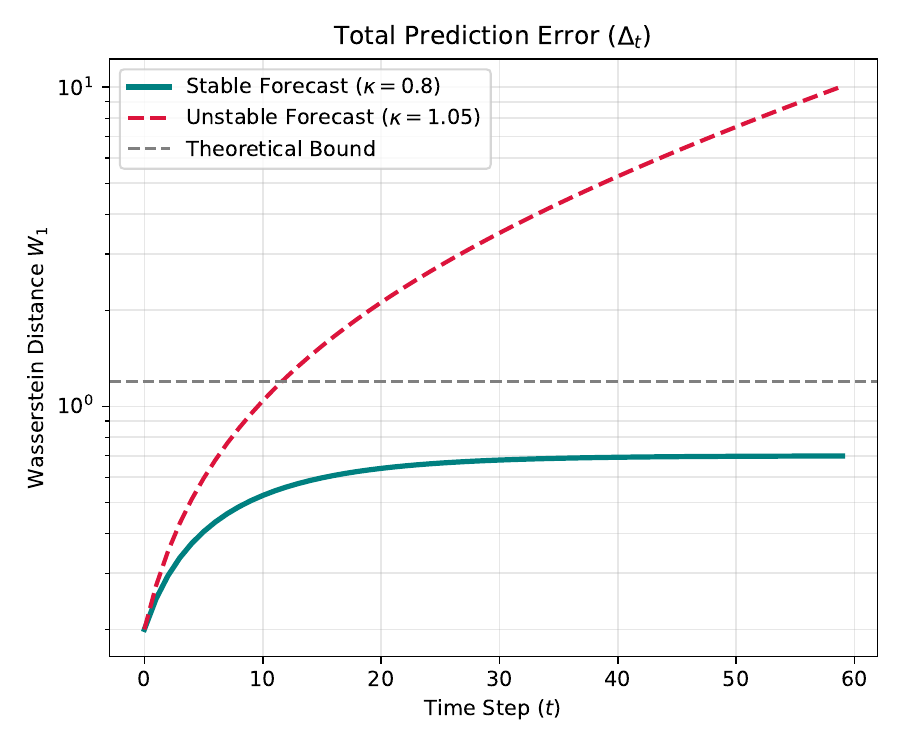}
}\label{fig:total_error}
\caption{Numerical validation of the stability-based error propagation theory (estimated with $R=10^4$ rollouts).
(a) Evolution of the latent-state error $\widehat{E}_t$ under stable vs.\ unstable configurations, including the damping of an injected pulse at $t=10$, consistent with Lemma~\ref{lem:hidden_recursion_main}.
(b) Evolution of the expected one-step discrepancy $\widehat{\Delta}_t$, showing bounded behavior in the stable regime ($\kappa<1$) and divergence in the unstable regime ($\kappa>1$), consistent with Theorem~\ref{thm:multi_step_consistency_main}.}
\label{fig:numerical_validation}
\end{figure}

Figure~\ref{fig:numerical_validation}(a) isolates latent error propagation. In the stable regime (e.g., $\kappa=0.8$), $\widehat{E}_t$ contracts and quickly settles to a small steady level; in the unstable regime (e.g., $\kappa=1.05$), latent mismatch amplifies across steps and grows over time. To make the mechanism explicit, we inject a one-time perturbation to the model latent state at $t=10$ while keeping the oracle sampling mechanism unchanged. Under $\kappa<1$, the excess error decays approximately geometrically, matching the qualitative implication of Lemma~\ref{lem:hidden_recursion_main} that past latent mismatch is discounted by contraction; under $\kappa>1$, the perturbation is amplified by feedback and does not dissipate.

Figure~\ref{fig:numerical_validation}(b) reports the end-to-end expected one-step discrepancy $\widehat{\Delta}_t$. In the stable regime ($\kappa<1$), $\widehat{\Delta}_t$ remains uniformly bounded over the horizon and saturates at a plateau: the intrinsic one-step mismatch (the $\varepsilon_{\mathrm{gen}}$ component) is present, but contraction limits how latent drift transfers into forecast degradation. In contrast, when $\kappa>1$, $\widehat{\Delta}_t$ escalates rapidly, illustrating the divergent cascade emphasized by the theory: one-step forecast mismatch perturbs the latent recursion through $L_x$, the resulting latent drift shifts the target conditional law through $L_P$, and the loop amplifies over time.

Overall, the illustration corroborates that long-horizon robustness depends not only on reducing one-step generative error $\varepsilon_{\mathrm{gen}}$, but also on maintaining a non-explosive stability margin $\kappa<1$ by enforcing hidden-state contraction and reducing effective input sensitivity. A quantitative sensitivity analysis with respect to $\kappa$ is provided in Appendix~\ref{app:sensitivity_kappa}, and a GRU-based derivation translating implementation safeguards into explicit, checkable bounds on $(\rho,L_x)$, and hence on $\kappa$, is provided in Appendix~\ref{app:gru_stability}.

\section{Sensitivity Analysis of the Stability Margin}
\label{app:sensitivity_kappa}

This appendix section complements the numerical illustration above by providing a quantitative sensitivity analysis of the stability margin $\kappa$. While the main theorem establishes the qualitative distinction between the stable regime ($\kappa<1$) and the unstable regime ($\kappa>1$), the purpose here is to characterize how forecasting performance changes continuously as the system approaches the critical threshold.

We perform a parametric sweep by varying $\kappa$ from $0.1$ to $0.99$ while maintaining the remaining system constants. This simulation is designed to validate the inverse relationship between the stability margin and the equilibrium error level, as predicted by the $(1-\kappa)$ term in the denominator of Theorem~\ref{thm:multi_step_consistency_main}.

\begin{figure}[htbp]
     \centering
     \includegraphics[width=0.7\textwidth]{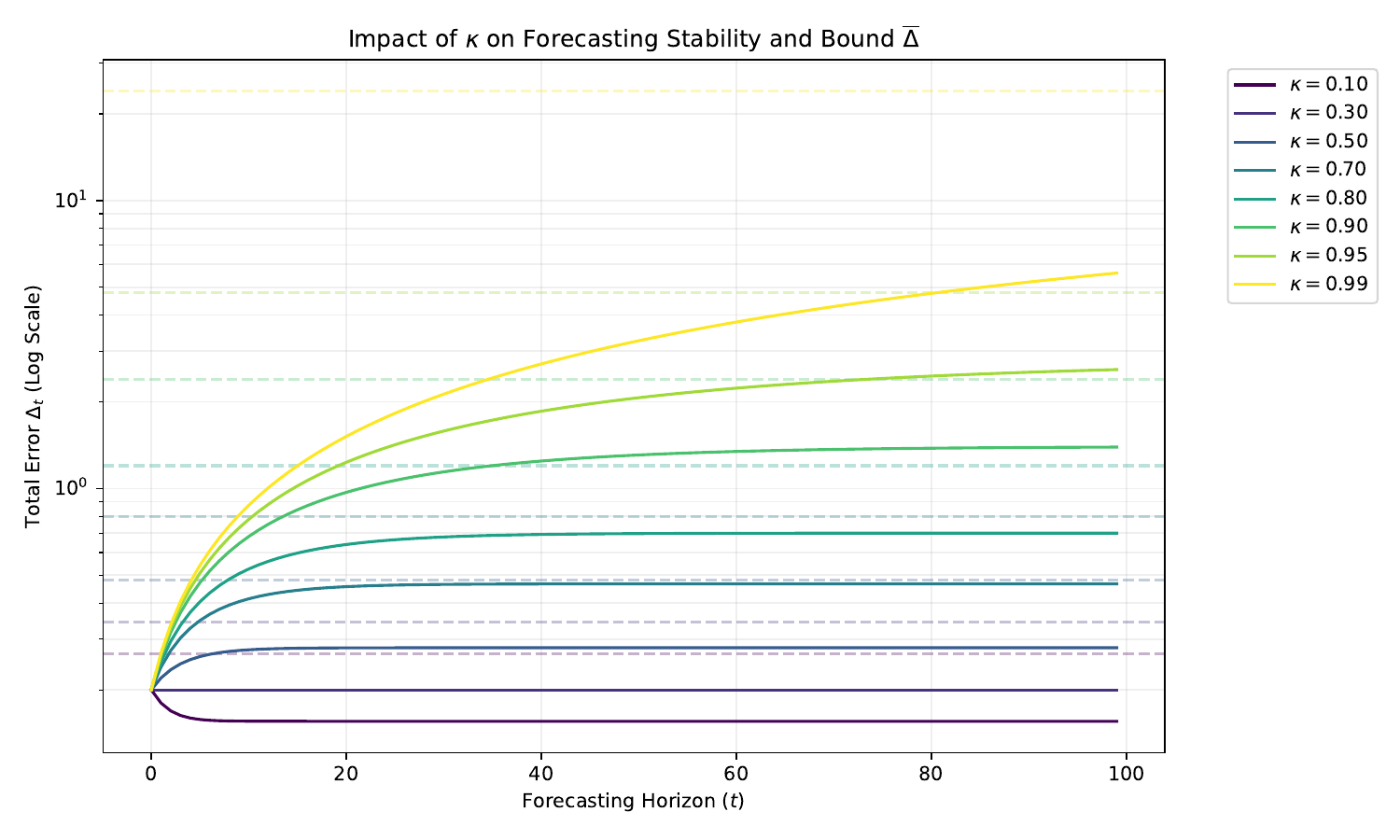}
     \caption{Sensitivity analysis of the forecasting error $\Delta_t$ across a spectrum of stability margins $\kappa$. Solid lines represent empirical error trajectories, while the corresponding dashed lines indicate the theoretical upper bounds $\overline{\Delta}$. As $\kappa$ approaches $1$, the bound exhibits nonlinear inflation.}
     \label{fig:kappa_sensitivity_app}
\end{figure}

The numerical results, visualized in Figure~\ref{fig:kappa_sensitivity_app}, reveal that forecasting quality is highly sensitive to the proximity of $\kappa$ to unity. In the \textit{tight-coupling regime} (e.g., $\kappa \le 0.5$), the total error $\Delta_t$ saturates almost instantaneously, indicating that the contractive power of the recurrent update is sufficient to suppress generation noise before it propagates. Conversely, in the \textit{near-critical regime} (e.g., $\kappa=0.99$), although the system remains mathematically stable, the error plateau inflates by orders of magnitude. In this state, the error trajectory exhibits a prolonged growth regime, signifying that the system has lost most of its inherent self-correction capability.

The results documented here suggest that for high-fidelity autoregressive forecasting, achieving $\kappa < 1$ is a necessary but not sufficient condition for practical reliability. We propose the following engineering insights:
\begin{enumerate}
    \item \textbf{The necessity of a buffer:} To maintain a tight error plateau, the architecture should be regularized to ensure that $\kappa$ remains within a safe range (e.g., $\kappa \in [0.5, 0.8]$). This provides a stability buffer that accounts for potential estimation noise in the score network.
    \item \textbf{Trade-off management:} While a very small $\kappa$ (e.g., $0.1$) yields maximum stability, it may over-constrain the recurrent unit's capacity. The use of a powerful conditional diffusion head allows for stricter regularization of the GRU weights without sacrificing the overall expressive power of the generative process.
\end{enumerate}
Through this sensitivity analysis, we confirm that the theoretical bound derived in Theorem~\ref{thm:multi_step_consistency_main} is not only a qualitative guarantee but also a quantitatively informative indicator of the system's long-run performance.

\section{Checkable Stability for GRU via Jacobian Spectral-Norm Bounds}
\label{app:gru_stability}

This appendix section makes the stability condition $\kappa<1$ in Theorem~\ref{thm:multi_step_consistency_main} practically checkable for the GRU latent update. The main text uses the abstract constants $(\rho,L_x)$ on the forward-invariant reachable set $\mathcal{H}$ of the autoregressive forecasting dynamics. Here we upper bound $(\rho,L_x)$ by trajectory-uniform spectral-norm bounds on GRU Jacobians, which can in turn be controlled through the spectral norms of the GRU weight matrices together with simple gate-range conditions. The final output is a fully checkable sufficient condition that can be enforced during training.

Throughout this appendix, $\|\cdot\|_2$ denotes the matrix spectral norm. Recall that the recurrent update is applied repeatedly over the forecasting horizon $t=t_0,\dots,T$ with time-shared parameters. Since the conditioning context $\mathbf{c}$ is fixed over the horizon, any affine dependence on $\mathbf{c}$ can be absorbed into bias terms; equivalently, one may view $\mathbf{c}$ as part of an augmented input that is constant in time. All bounds below are intended to hold uniformly on a compact, convex set $\mathcal{T}\subset\mathbb{R}^m\times\mathbb{R}^d$ that contains all relevant pairs $(\mathbf{h},\mathbf{x})$ visited by $\{(\widehat{\mathbf{h}}_{t-1},\widehat{\mathbf{x}}_t)\}_{t=t_0}^T$ and by the best-in-class counterpart $\{(\mathbf{h}_{t-1}^\star,\mathbf{x}_t)\}_{t=t_0}^T$ under the fixed context $\mathbf{c}$. In particular, $\mathcal{T}$ can be chosen as a convex superset of the forward-invariant set $\mathcal{H}$ paired with the relevant input range. All vectors and matrices are typeset in boldface.

We first record a standard implication that connects Jacobian spectral-norm bounds to the constants in Assumption~\ref{assump:combined_main}.

\begin{lemma}[Jacobian bounds imply a contractive-Lipschitz inequality on $\mathcal{T}$]
\label{lem:jacobian_to_lipschitz_app}
Assume $f_\phi(\mathbf{h},\mathbf{x},\mathbf{c})$ is continuously differentiable in $(\mathbf{h},\mathbf{x})$ on $\mathcal{T}$.
Define
\[
\rho_\theta \;:=\; \sup_{(\mathbf{h},\mathbf{x})\in\mathcal{T}}
\left\|\frac{\partial f_\phi(\mathbf{h},\mathbf{x},\mathbf{c})}{\partial \mathbf{h}}\right\|_2,
\qquad
L_{x,\theta} \;:=\; \sup_{(\mathbf{h},\mathbf{x})\in\mathcal{T}}
\left\|\frac{\partial f_\phi(\mathbf{h},\mathbf{x},\mathbf{c})}{\partial \mathbf{x}}\right\|_2.
\]
Then for any $(\mathbf{h},\mathbf{x}),(\mathbf{h}',\mathbf{x}')\in\mathcal{T}$,
\[
\|f_\phi(\mathbf{h},\mathbf{x},\mathbf{c})-f_\phi(\mathbf{h}',\mathbf{x}',\mathbf{c})\|
\le
\rho_\theta \|\mathbf{h}-\mathbf{h}'\| + L_{x,\theta}\|\mathbf{x}-\mathbf{x}'\|.
\]
\end{lemma}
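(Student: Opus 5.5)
The plan is to use the mean value theorem in integral form along the straight segment joining the two points. This is legitimate because $\mathcal{T}$ is convex. Fix $(\mathbf{h},\mathbf{x}),(\mathbf{h}',\mathbf{x}')\in\mathcal{T}$. For $\tau\in[0,1]$ set
\[
\mathbf{h}(\tau):=\mathbf{h}'+\tau(\mathbf{h}-\mathbf{h}'),\qquad \mathbf{x}(\tau):=\mathbf{x}'+\tau(\mathbf{x}-\mathbf{x}'),
\]
and consider $g(\tau):=f_\phi(\mathbf{h}(\tau),\mathbf{x}(\tau),\mathbf{c})$. Convexity guarantees $(\mathbf{h}(\tau),\mathbf{x}(\tau))\in\mathcal{T}$ for every $\tau$. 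Continuous differentiability of $f_\phi$ in $(\mathbf{h},\mathbf{x})$ on $\mathcal{T}$ then makes $g$ continuously differentiable on $[0,1]$; at the endpoints we use one-sided derivatives, or an open neighborhood if $f_\phi$ is $C^1$ there. The chain rule gives
\[
g'(\tau)=\frac{\partial f_\phi}{\partial \mathbf{h}}(\mathbf{h}(\tau),\mathbf{x}(\tau),\mathbf{c})(\mathbf{h}-\mathbf{h}')+\frac{\partial f_\phi}{\partial \mathbf{x}}(\mathbf{h}(\tau),\mathbf{x}(\tau),\mathbf{c})(\mathbf{x}-\mathbf{x}').
\]

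Next, apply the fundamental theorem of calculus componentwise:
\[
f_\phi(\mathbf{h},\mathbf{x},\mathbf{c})-f_\phi(\mathbf{h}',\mathbf{x}',\mathbf{c})=g(1)-g(0)=\int_0^1 g'(\tau)\,d\tau .
\]
Take Euclidean norms and use $\|\int_0^1 \mathbf{v}(\tau)d\tau\|\le\int_0^1\|\mathbf{v}(\tau)\|d\tau$ for continuous vector-valued integrands. Then apply the triangle inequality to the two Jacobian terms. Each Jacobian–vector product is bounded by the operator (spectral) norm times the vector norm, and the spectral norms are bounded by the suprema $\rho_\theta$ and $L_{x,\theta}$, since the segment lies in $\mathcal{T}$. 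This yields
\[
\|g(1)-g(0)\|\le\int_0^1\big(\rho_\theta\|\mathbf{h}-\mathbf{h}'\|+L_{x,\theta}\|\mathbf{x}-\mathbf{x}'\|\big)d\tau ,
\]
which is the claimed inequality.

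The only delicate points are technical. First, the vector-valued mean value theorem holds only as an inequality via the integral, not as an equality at an intermediate point, which is why I route through the integral. Second, one must ensure the suprema are finite. This follows because the Jacobians are continuous on the compact set $\mathcal{T}$, and if a supremum were infinite the bound would hold trivially anyway. Third, convexity of $\mathcal{T}$ is precisely what keeps the path inside the region where the Jacobian bounds apply; I expect this to be the step requiring care, and it is exactly why the appendix chooses $\mathcal{T}$ as a convex superset of $\mathcal{H}$ paired with the input range. Finally, restricting to pairs in $\mathcal{H}$ shows that $(\rho_\theta,L_{x,\theta})$ can serve as $(\rho,L_x)$ in part (2) of Assumption~\ref{assump:combined_main}.
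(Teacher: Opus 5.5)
Your proof is correct, but it is organized differently from the paper's. The paper splits the difference at the mixed point $(\mathbf{h}',\mathbf{x})$. It integrates $\partial f_\phi/\partial\mathbf{h}$ along the segment from $(\mathbf{h}',\mathbf{x})$ to $(\mathbf{h},\mathbf{x})$, and $\partial f_\phi/\partial\mathbf{x}$ along the segment from $(\mathbf{h}',\mathbf{x}')$ to $(\mathbf{h}',\mathbf{x})$, so each constant comes from moving one argument only. You instead integrate the total derivative along the single segment from $(\mathbf{h}',\mathbf{x}')$ to $(\mathbf{h},\mathbf{x})$ and separate the two Jacobian terms inside the integrand.

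The constants come out identical, but your route uses exactly the stated hypothesis: convexity of $\mathcal{T}$ is what puts your segment in $\mathcal{T}$. The paper's claim that $(\mathbf{h}(s),\mathbf{x})\in\mathcal{T}$ ``because $\mathcal{T}$ is convex'' tacitly requires the mixed point $(\mathbf{h}',\mathbf{x})$ to lie in $\mathcal{T}$. That holds when $\mathcal{T}$ is a product of convex sets, e.g., $\mathrm{conv}(\mathcal{H})$ times a convex input range. It fails for a general convex set such as $\mathcal{T}=\{(a,a):a\in[0,1]\}$ with endpoints $(0,0)$ and $(1,1)$, where the mixed point $(0,1)$ lies outside $\mathcal{T}$.

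So your version is the more economical one for the lemma as stated. A product structure only becomes necessary in your final remark. Assumption~\ref{assump:combined_main}(2) quantifies over arbitrary $\mathbf{h},\mathbf{h}'\in\mathcal{H}$ and arbitrary inputs, so $\mathcal{T}$ must contain all such pairs, not just the visited ones.
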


\textit{Proof.}\quad
For any $(\mathbf{h},\mathbf{x}),(\mathbf{h}',\mathbf{x}')\in\mathcal{T}$, write
\[
f_\phi(\mathbf{h},\mathbf{x},\mathbf{c})-f_\phi(\mathbf{h}',\mathbf{x}',\mathbf{c})
=
\big(f_\phi(\mathbf{h},\mathbf{x},\mathbf{c})-f_\phi(\mathbf{h}',\mathbf{x},\mathbf{c})\big)
+
\big(f_\phi(\mathbf{h}',\mathbf{x},\mathbf{c})-f_\phi(\mathbf{h}',\mathbf{x}',\mathbf{c})\big).
\]
Consider the first difference with $\mathbf{x}$ fixed. Define the path
$\mathbf{h}(s):=\mathbf{h}'+s(\mathbf{h}-\mathbf{h}')$ for $s\in[0,1]$.
Because $\mathcal{T}$ is convex, $(\mathbf{h}(s),\mathbf{x})\in\mathcal{T}$ for all $s$.
Let $g(s):=f_\phi(\mathbf{h}(s),\mathbf{x},\mathbf{c})$. By the fundamental theorem of calculus,
\[
f_\phi(\mathbf{h},\mathbf{x},\mathbf{c})-f_\phi(\mathbf{h}',\mathbf{x},\mathbf{c})
=
\int_0^1 g'(s)\,ds
=
\int_0^1
\frac{\partial f_\phi(\mathbf{h}(s),\mathbf{x},\mathbf{c})}{\partial \mathbf{h}}(\mathbf{h}-\mathbf{h}')\,ds,
\]
hence
\[
\|f_\phi(\mathbf{h},\mathbf{x},\mathbf{c})-f_\phi(\mathbf{h}',\mathbf{x},\mathbf{c})\|
\le
\int_0^1
\left\|\frac{\partial f_\phi(\mathbf{h}(s),\mathbf{x},\mathbf{c})}{\partial \mathbf{h}}\right\|_2 ds\,
\|\mathbf{h}-\mathbf{h}'\|
\le
\rho_\theta \|\mathbf{h}-\mathbf{h}'\|.
\]
An analogous argument with $\mathbf{h}'$ fixed and the path $\mathbf{x}(s):=\mathbf{x}'+s(\mathbf{x}-\mathbf{x}')$ gives
\[
\|f_\phi(\mathbf{h}',\mathbf{x},\mathbf{c})-f_\phi(\mathbf{h}',\mathbf{x}',\mathbf{c})\|
\le
L_{x,\theta}\|\mathbf{x}-\mathbf{x}'\|.
\]
Combining the two bounds proves the claim.
\hfill$\Box$

We consider a standard GRU cell with hidden state $\mathbf{h}\in\mathbb{R}^m$ and input $\mathbf{x}\in\mathbb{R}^d$. Suppressing the fixed context $\mathbf{c}$ for clarity, the GRU update map in \eqref{eq:ar_gen_rule} is
\[
\mathbf{z}=\sigma(\mathbf{W}_z \mathbf{x} + \mathbf{U}_z \mathbf{h} + \mathbf{b}_z),
\qquad
\mathbf{r}=\sigma(\mathbf{W}_r \mathbf{x} + \mathbf{U}_r \mathbf{h} + \mathbf{b}_r),
\]
\[
\widetilde{\mathbf{h}}=\tanh\!\big(\mathbf{W}_h \mathbf{x} + \mathbf{U}_h(\mathbf{r}\odot \mathbf{h}) + \mathbf{b}_h\big),
\qquad
f_\phi(\mathbf{h},\mathbf{x},\mathbf{c})=(\mathbf{1}-\mathbf{z})\odot \mathbf{h} + \mathbf{z}\odot \widetilde{\mathbf{h}}.
\]
Here $\mathbf{W}_z,\mathbf{W}_r,\mathbf{W}_h\in\mathbb{R}^{m\times d}$ and $\mathbf{U}_z,\mathbf{U}_r,\mathbf{U}_h\in\mathbb{R}^{m\times m}$ are weight matrices, and $\mathbf{b}_z,\mathbf{b}_r,\mathbf{b}_h\in\mathbb{R}^m$ are bias vectors. $\odot$ denotes the Hadamard product.

\begin{assumption}[Trajectory boundedness and gate ranges on $\mathcal{T}$]
\label{assump:gru_gate_bounds_app}
On $\mathcal{T}$, we assume
\[
\|\mathbf{h}\|_\infty \le 1,
\]
and there exist constants $z_{\min}\in(0,1)$, $z_{\max}\in(0,1)$, and $r_{\max}\in(0,1)$ such that
\[
z_{\min}\le z_i \le z_{\max},
\qquad
0\le r_i \le r_{\max},
\qquad
\text{for all } i=1,\dots,m,
\]
where $\mathbf{z}=(z_1,\dots,z_m)^\top$ and $\mathbf{r}=(r_1,\dots,r_m)^\top$ are the update and reset gates.
\end{assumption}

\begin{remark}
Assumption~\ref{assump:gru_gate_bounds_app} restricts the analysis to the compact set $\mathcal{T}$ that contains the actual and oracle trajectories. This is aligned with the main text, where the stability analysis is carried out on the forward-invariant reachable set $\mathcal{H}$ rather than globally on $\mathbb{R}^m$. The bound $\|\mathbf{h}\|_\infty\le 1$ is natural for the GRU because $f_\phi$ is an elementwise convex combination of $\mathbf{h}$ and $\widetilde{\mathbf{h}}\in[-1,1]^m$. The gate-range conditions exclude degenerate regimes in which $\mathbf{z}$ approaches $\mathbf{0}$ or $\mathbf{1}$, which can make uniform contraction bounds either impossible or excessively conservative. These ranges can be encouraged by initialization and mild regularization, and in practice they can be monitored empirically on rollout trajectories. This highlights that a minimum update rate $z_{\min}>0$ acts as a restorative force that prevents the latent state from becoming a pure random walk.
\end{remark}

We repeatedly use the derivative bounds
\[
\sup_u |\tanh'(u)| \le 1,
\qquad
\sup_u |\sigma'(u)| \le \frac{1}{4}.
\]
For any vector $\mathbf{v}$, $\|\mathrm{Diag}(\mathbf{v})\|_2=\|\mathbf{v}\|_\infty$.

\begin{lemma}[GRU Jacobian spectral-norm bounds on $\mathcal{T}$]
\label{lem:gru_jacobian_bounds_app}
Under Assumption~\ref{assump:gru_gate_bounds_app}, the GRU update map satisfies
\begin{equation}
\label{eq:gru_rho_bar_app}
\left\|\frac{\partial f_\phi(\mathbf{h},\mathbf{x},\mathbf{c})}{\partial \mathbf{h}}\right\|_2
\le
(1-z_{\min})
+\frac{1}{2}\|\mathbf{U}_z\|_2
+ z_{\max}\,\|\mathbf{U}_h\|_2\!\left(r_{\max}+\frac{1}{4}\|\mathbf{U}_r\|_2\right),
\end{equation}
and
\begin{equation}
\label{eq:gru_Lx_bar_app}
\left\|\frac{\partial f_\phi(\mathbf{h},\mathbf{x},\mathbf{c})}{\partial \mathbf{x}}\right\|_2
\le
\frac{1}{2}\|\mathbf{W}_z\|_2
+ z_{\max}\left(\|\mathbf{W}_h\|_2+\frac{1}{4}\|\mathbf{U}_h\|_2\,\|\mathbf{W}_r\|_2\right).
\end{equation}
Define the right-hand sides of \eqref{eq:gru_rho_bar_app} and \eqref{eq:gru_Lx_bar_app} as
$\overline{\rho}_{\mathrm{GRU}}$ and $\overline{L}_{x,\mathrm{GRU}}$, respectively. Then, on $\mathcal{T}$,
\[
\rho_\theta \le \overline{\rho}_{\mathrm{GRU}},
\qquad
L_{x,\theta}\le \overline{L}_{x,\mathrm{GRU}}.
\]
\end{lemma}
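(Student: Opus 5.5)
The plan is to compute the two Jacobians of the GRU map directly by the chain and product rules, bound each term with submultiplicativity of $\|\cdot\|_2$, and use the identity $\|\mathrm{Diag}(\mathbf{v})\|_2=\|\mathbf{v}\|_\infty$. Taking the supremum over $\mathcal{T}$ then gives $\rho_\theta\le\overline{\rho}_{\mathrm{GRU}}$ and $L_{x,\theta}\le\overline{L}_{x,\mathrm{GRU}}$. Combined with Lemma~\ref{lem:jacobian_to_lipschitz_app}, this supplies the constants of Assumption~\ref{assump:combined_main}(2).

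First write $f_\phi=(\mathbf{1}-\mathbf{z})\odot\mathbf{h}+\mathbf{z}\odot\widetilde{\mathbf{h}}$. Differentiating with respect to $\mathbf{h}$ gives
\[
\frac{\partial f_\phi}{\partial \mathbf{h}}=\mathrm{Diag}(\mathbf{1}-\mathbf{z})+\mathrm{Diag}(\widetilde{\mathbf{h}}-\mathbf{h})\frac{\partial \mathbf{z}}{\partial \mathbf{h}}+\mathrm{Diag}(\mathbf{z})\frac{\partial \widetilde{\mathbf{h}}}{\partial \mathbf{h}}.
\]
The gate Jacobian is $\partial\mathbf{z}/\partial\mathbf{h}=\mathrm{Diag}(\sigma'(\cdot))\mathbf{U}_z$. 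For the candidate state,
\[
\frac{\partial \widetilde{\mathbf{h}}}{\partial \mathbf{h}}=\mathrm{Diag}(\tanh'(\cdot))\,\mathbf{U}_h\Big(\mathrm{Diag}(\mathbf{r})+\mathrm{Diag}(\mathbf{h})\,\mathrm{Diag}(\sigma'(\cdot))\mathbf{U}_r\Big).
\]
The three terms are bounded as follows.
\begin{itemize}
\item The first term has norm $\le 1-z_{\min}$ by the gate range.
\item For the second term, $\|\widetilde{\mathbf{h}}-\mathbf{h}\|_\infty\le 2$ because $\widetilde{\mathbf{h}}\in[-1,1]^m$ and $\|\mathbf{h}\|_\infty\le1$ by Assumption~\ref{assump:gru_gate_bounds_app}. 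With $|\sigma'|\le 1/4$ this gives $\tfrac12\|\mathbf{U}_z\|_2$.
\item For the third term, $\|\mathrm{Diag}(\mathbf{z})\|_2\le z_{\max}$, $|\tanh'|\le1$, $\|\mathrm{Diag}(\mathbf{r})\|_2\le r_{\max}$ and $\|\mathrm{Diag}(\mathbf{h})\|_2\le 1$. This yields $z_{\max}\|\mathbf{U}_h\|_2(r_{\max}+\tfrac14\|\mathbf{U}_r\|_2)$.
\end{itemize}
The triangle inequality then gives \eqref{eq:gru_rho_bar_app}.

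The input Jacobian is handled the same way; the $\mathrm{Diag}(\mathbf{1}-\mathbf{z})$ term is absent since $\mathbf{h}$ does not depend on $\mathbf{x}$. We have
\[
\frac{\partial f_\phi}{\partial \mathbf{x}}=\mathrm{Diag}(\widetilde{\mathbf{h}}-\mathbf{h})\,\mathrm{Diag}(\sigma')\mathbf{W}_z+\mathrm{Diag}(\mathbf{z})\,\mathrm{Diag}(\tanh')\Big(\mathbf{W}_h+\mathbf{U}_h\,\mathrm{Diag}(\mathbf{h})\,\mathrm{Diag}(\sigma')\mathbf{W}_r\Big).
\]
The same elementwise bounds give $\tfrac12\|\mathbf{W}_z\|_2+z_{\max}(\|\mathbf{W}_h\|_2+\tfrac14\|\mathbf{U}_h\|_2\|\mathbf{W}_r\|_2)$, which is \eqref{eq:gru_Lx_bar_app}. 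The fixed context $\mathbf{c}$ only shifts the biases, so it does not enter either Jacobian.

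The main obstacle is bookkeeping rather than depth. The Hadamard-product derivatives must be written correctly as diagonal-matrix products, in particular the reset-gate chain through $\mathbf{r}\odot\mathbf{h}$, which contributes the factor $\mathrm{Diag}(\mathbf{h})$. The argument must also use $\|\mathbf{h}\|_\infty\le1$ at the right places: this gives the factor $2$ in $\|\widetilde{\mathbf{h}}-\mathbf{h}\|_\infty$ and controls $\|\mathrm{Diag}(\mathbf{h})\|_2$. All bounds hold pointwise on $\mathcal{T}$, so passing to the suprema defining $\rho_\theta$ and $L_{x,\theta}$ is immediate.
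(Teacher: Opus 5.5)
Your proposal is correct and follows essentially the same route as the paper. Both use the same Jacobian decompositions from the product and chain rules, the same bounds $\|\widetilde{\mathbf{h}}-\mathbf{h}\|_\infty\le 2$, $|\sigma'|\le 1/4$, $|\tanh'|\le 1$ and the gate ranges, and the same passage to the suprema defining $\rho_\theta$ and $L_{x,\theta}$; your sign $\mathrm{Diag}(\widetilde{\mathbf{h}}-\mathbf{h})$ in the update-gate term is in fact the correct one (the paper writes $\mathrm{Diag}(\mathbf{h}-\widetilde{\mathbf{h}})$), which is immaterial since only its $\infty$-norm enters.
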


\begin{remark}
Lemma~\ref{lem:gru_jacobian_bounds_app} is fully explicit and checkable: it depends only on spectral norms of GRU weight matrices and gate-range constants $(z_{\min},z_{\max},r_{\max})$. The bound is conservative but transparent, and it supports stability verification by combining Lemma~\ref{lem:jacobian_to_lipschitz_app} with Theorem~\ref{thm:multi_step_consistency_main}. The Jacobian bounds are sufficient, not necessary, and can be conservative because several correlated terms are upper bounded via triangle inequalities. In practice, stability is commonly enforced by spectral normalization or spectral clipping on GRU weight matrices, which directly controls the spectral norms appearing in the bounds and makes the sufficient condition checkable.
\end{remark}

\textit{Proof.}\quad
From $f_\phi(\mathbf{h},\mathbf{x},\mathbf{c})=(\mathbf{1}-\mathbf{z})\odot \mathbf{h} + \mathbf{z}\odot \widetilde{\mathbf{h}}$, the Jacobian with respect to $\mathbf{h}$ is
\[
\frac{\partial f_\phi}{\partial \mathbf{h}}
=
\mathrm{Diag}(\mathbf{1}-\mathbf{z})
+ \mathrm{Diag}(\mathbf{h}-\widetilde{\mathbf{h}})\frac{\partial \mathbf{z}}{\partial \mathbf{h}}
+ \mathrm{Diag}(\mathbf{z})\frac{\partial \widetilde{\mathbf{h}}}{\partial \mathbf{h}}.
\]
First, $\|\mathrm{Diag}(\mathbf{1}-\mathbf{z})\|_2=\|\mathbf{1}-\mathbf{z}\|_\infty\le 1-z_{\min}$.

Second, $\|\mathbf{h}\|_\infty\le 1$ and $\|\widetilde{\mathbf{h}}\|_\infty\le 1$ imply $\|\mathbf{h}-\widetilde{\mathbf{h}}\|_\infty\le 2$, hence $\|\mathrm{Diag}(\mathbf{h}-\widetilde{\mathbf{h}})\|_2\le 2$. Moreover,
\[
\frac{\partial \mathbf{z}}{\partial \mathbf{h}}
=
\mathrm{Diag}\big(\sigma'(\mathbf{W}_z \mathbf{x}+\mathbf{U}_z \mathbf{h}+\mathbf{b}_z)\big)\,\mathbf{U}_z,
\qquad
\left\|\frac{\partial \mathbf{z}}{\partial \mathbf{h}}\right\|_2\le \frac{1}{4}\|\mathbf{U}_z\|_2.
\]
Thus the middle term is bounded by $2\cdot \frac{1}{4}\|\mathbf{U}_z\|_2=\frac{1}{2}\|\mathbf{U}_z\|_2$.

Third, define $\mathbf{a}_h:=\mathbf{W}_h \mathbf{x} + \mathbf{U}_h(\mathbf{r}\odot \mathbf{h})+\mathbf{b}_h$. Then
\[
\frac{\partial \widetilde{\mathbf{h}}}{\partial \mathbf{h}}
=
\mathrm{Diag}\big(\tanh'(\mathbf{a}_h)\big)\,\mathbf{U}_h\left(\mathrm{Diag}(\mathbf{r})+\mathrm{Diag}(\mathbf{h})\frac{\partial \mathbf{r}}{\partial \mathbf{h}}\right).
\]
Using $\|\mathrm{Diag}(\tanh'(\mathbf{a}_h))\|_2\le 1$, $\|\mathrm{Diag}(\mathbf{r})\|_2\le r_{\max}$, $\|\mathrm{Diag}(\mathbf{h})\|_2\le 1$, and
\[
\frac{\partial \mathbf{r}}{\partial \mathbf{h}}
=
\mathrm{Diag}\big(\sigma'(\mathbf{W}_r \mathbf{x}+\mathbf{U}_r \mathbf{h}+\mathbf{b}_r)\big)\,\mathbf{U}_r,
\qquad
\left\|\frac{\partial \mathbf{r}}{\partial \mathbf{h}}\right\|_2\le \frac{1}{4}\|\mathbf{U}_r\|_2,
\]
we obtain
\[
\left\|\frac{\partial \widetilde{\mathbf{h}}}{\partial \mathbf{h}}\right\|_2
\le
\|\mathbf{U}_h\|_2\left(r_{\max}+\frac{1}{4}\|\mathbf{U}_r\|_2\right).
\]
Finally, $\|\mathrm{Diag}(\mathbf{z})\|_2\le z_{\max}$, hence
\[
\left\|\mathrm{Diag}(\mathbf{z})\frac{\partial \widetilde{\mathbf{h}}}{\partial \mathbf{h}}\right\|_2
\le
z_{\max}\,\|\mathbf{U}_h\|_2\left(r_{\max}+\frac{1}{4}\|\mathbf{U}_r\|_2\right).
\]
Summing the three bounds yields \eqref{eq:gru_rho_bar_app}.

For the Jacobian with respect to $\mathbf{x}$,
\[
\frac{\partial f_\phi}{\partial \mathbf{x}}
=
\mathrm{Diag}(\mathbf{h}-\widetilde{\mathbf{h}})\frac{\partial \mathbf{z}}{\partial \mathbf{x}}
+\mathrm{Diag}(\mathbf{z})\frac{\partial \widetilde{\mathbf{h}}}{\partial \mathbf{x}}.
\]
As above, $\|\mathrm{Diag}(\mathbf{h}-\widetilde{\mathbf{h}})\|_2\le 2$ and
\[
\frac{\partial \mathbf{z}}{\partial \mathbf{x}}
=
\mathrm{Diag}\big(\sigma'(\mathbf{W}_z \mathbf{x}+\mathbf{U}_z \mathbf{h}+\mathbf{b}_z)\big)\,\mathbf{W}_z,
\qquad
\left\|\frac{\partial \mathbf{z}}{\partial \mathbf{x}}\right\|_2\le \frac{1}{4}\|\mathbf{W}_z\|_2,
\]
giving $\frac{1}{2}\|\mathbf{W}_z\|_2$ for the first term.
For the second term,
\[
\frac{\partial \widetilde{\mathbf{h}}}{\partial \mathbf{x}}
=
\mathrm{Diag}\big(\tanh'(\mathbf{a}_h)\big)\left(\mathbf{W}_h + \mathbf{U}_h\,\mathrm{Diag}(\mathbf{h})\frac{\partial \mathbf{r}}{\partial \mathbf{x}}\right),
\]
and
\[
\frac{\partial \mathbf{r}}{\partial \mathbf{x}}
=
\mathrm{Diag}\big(\sigma'(\mathbf{W}_r \mathbf{x}+\mathbf{U}_r \mathbf{h}+\mathbf{b}_r)\big)\,\mathbf{W}_r,
\qquad
\left\|\frac{\partial \mathbf{r}}{\partial \mathbf{x}}\right\|_2\le \frac{1}{4}\|\mathbf{W}_r\|_2.
\]
Thus,
\[
\left\|\frac{\partial \widetilde{\mathbf{h}}}{\partial \mathbf{x}}\right\|_2
\le \|\mathbf{W}_h\|_2 + \frac{1}{4}\|\mathbf{U}_h\|_2\|\mathbf{W}_r\|_2,
\]
and multiplying by $\|\mathrm{Diag}(\mathbf{z})\|_2\le z_{\max}$ yields \eqref{eq:gru_Lx_bar_app}.
\hfill$\Box$

Combining Lemma~\ref{lem:gru_jacobian_bounds_app} with Lemma~\ref{lem:jacobian_to_lipschitz_app}, one may take
\[
\rho := \rho_\theta,\quad L_x := L_{x,\theta}
\qquad\text{and in particular}\qquad
\rho \le \overline{\rho}_{\mathrm{GRU}},\quad L_x\le \overline{L}_{x,\mathrm{GRU}}.
\]
Therefore, a fully checkable sufficient condition that implies the stability margin $\kappa<1$ in Theorem~\ref{thm:multi_step_consistency_main} is
\[
\overline{\rho}_{\mathrm{GRU}}<1
\qquad\text{and}\qquad
L_P\,\overline{L}_{x,\mathrm{GRU}} < 1-\overline{\rho}_{\mathrm{GRU}}.
\]
This condition can be enforced by spectral normalization or spectral clipping on the GRU weight matrices $\mathbf{U}_z,\mathbf{U}_r,\mathbf{U}_h,\mathbf{W}_z,\mathbf{W}_r,\mathbf{W}_h$, together with mild regularization that keeps the empirical gate ranges within $(z_{\min},z_{\max},r_{\max})$ on rollout trajectories.
\end{document}